\newcommand{\comm}[1]{}
\newcounter{func}
\newenvironment{func}[1][htb]
  {
   \let\c@algocf\c@func
   \begin{algorithm}[#1]%
  }{\end{algorithm}}
\newcommand{\mymodel}{\ensuremath{\mathsf{NARGIS}}\xspace}
\newcommand{\bful}[1]{\textbf{\underline{#1}}}
\newcommand{\model}{NARGIS}
\newcommand{\module}[1]{\footnotesize\texttt{#1}\normalsize\xspace}
\newcommand{\dottedline}[2]{\cdashline{#1-#2}[.6pt/1pt]}
\newcommand{\fixtableonecol}[1]{
\renewcommand{\arraystretch}{1}
    \fontsize{#1}{#1}\selectfont
    \setlength{\tabcolsep}{0.5pt}
}
\newcommand{\bigline}{\hline\\}
\newcommand{\erf}{\emph{EdgeRand $(\epsilon=6)$}\ }
\newcommand{\lapf}{\emph{LapGraph ($\epsilon=6$)}\ }
\newcommand{\lapd}{\emph{LapGraph ($\epsilon=0.1$)}\ }
\newcommand{\dphfp}{\emph{DP-HFP}}
\newcommand{\dphpp}{\emph{DP-HPP}}
\newcommand{\aster}{($\boldsymbol{*}$)} 
\newcommand{\cross}{(\dag)}
\newcommand{\mymodeldef}{\emph{NARGIS-DefTuned}}
\newcommand{\mymodelpred}{\emph{NARGIS-PredTuned}}
\newtheorem{proposition}{Proposition}
\crefname{algorithm}{algorithm}{algorithms} 
\Crefname{algorithm}{Algorithm}{Algorithms} 
\crefname{func}{function}{functions} 
\Crefname{func}{Function}{Functions} 
\begin{document}

\title[How Feasible is Augmenting Fake Nodes with Learnable Features as a Counter-strategy against Link Stealing Attacks?]{How Feasible is Augmenting Fake Nodes with Learnable Features as a Counter-strategy against Link Stealing Attacks?}

\author{Mir Imtiaz Mostafiz}
\email{mmostafi@purdue.edu}
\affiliation{%
\department{Department of Computer Science}
  \institution{Purdue University}
  \city{West Lafayette}
  \state{Indiana}
  \country{USA}
}

\author{Imtiaz Karim}
\email{karim7@purdue.edu}
\affiliation{%
\department{Department of Computer Science}
  \institution{Purdue University}
  \city{West Lafayette}
  \state{Indiana}
  \country{USA}
}

\author{Elisa Bertino}
\email{bertino@purdue.edu}
\affiliation{%
\department{Department of Computer Science}
  \institution{Purdue University}
  \city{West Lafayette}
  \state{Indiana}
  \country{USA}
}

\renewcommand{\shortauthors}{Mir Imtiaz et al.}

\begin{abstract}
Graph Neural Networks (GNNs) are widely used and deployed for graph-based prediction tasks. However, as good as GNNs are for learning graph data, they also come with the risk of privacy leakage. For instance, an attacker can run carefully crafted queries on the GNNs and, from the responses, can 
infer the existence of an edge between a pair of nodes. This attack, dubbed as a \emph{link-stealing} attack, can jeopardize the user's privacy by leaking potentially sensitive information. To protect against this attack, we propose an approach called $\text{\textbf{\underline{N}}}$ode $\text{\textbf{\underline{A}}}$ugmentation for $\text{\textbf{\underline{R}}}$estricting $\text{\textbf{\underline{G}}}$raphs from $\text{\textbf{\underline{I}}}$nsinuating their $\text{\textbf{\underline{S}}}$tructure (\mymodel{}) and study its feasibility. \mymodel{} is focused on reshaping the graph embedding space so that the posterior from the GNN model will still provide utility for the prediction task but will introduce ambiguity for the link-stealing attackers. To this end, \mymodel{} applies spectral clustering on the given graph to facilitate it being augmented with new nodes- that have learned features instead of fixed ones. It utilizes tri-level optimization for learning parameters for the GNN model, surrogate attacker model, and our defense model (i.e. learnable node features). We extensively evaluate \mymodel{} on three benchmark citation datasets over eight knowledge availability settings for the attackers. We also evaluate the model fidelity and defense performance on influence-based link inference attacks. Through our studies, we have figured out the best feature of \mymodel{}- its superior fidelity-privacy performance trade-off in a significant number of cases. We also have discovered in which cases the model needs to be improved, and proposed ways to integrate different schemes to make the model more robust against link stealing attacks.

\end{abstract}

\maketitle
\thispagestyle{firstpage}  

\section{Introduction}
Graph-based data representations are widely used in online products~\cite{liu2021item}, social networks~\cite{sankar2021graph}, content services ~\cite{wu2022graph}, and web services~\cite{neo4j_aura}. For learning 
graph-based data, different kinds of Graph Neural Network (GNN) architectures, e.g., GCN~\cite{kipf2016semi}, GAT~\cite{veličković2018graph}, 
SAGE \cite{hamilton2017inductive}
have been deployed in these services
to predict user preferences, suggest preferable products, and enhance community structure. These online services gather user data that often contain sensitive information and are often of interest to malicious parties. Hence, these services can sometimes be provided through an API~\cite{facebookgraph} instead of direct access to
the models for security and privacy purposes. Nonetheless, both API and model access are still vulnerable to leaking sensitive information through the GNNs
~\cite{wu2022linkteller,he2021stealing}. When a GNN is trained on a particular graph dataset, malicious parties can run carefully crafted queries on that GNN’s API and, from the responses, can reverse-engineer the graph structures. These forms of attacks are known as Graph Reconstruction Attacks (GIA)~\cite{he2021stealing,zhang2022inference,zhou2023strengthening},  where an adversary has some prior (i.e., node features, dataset name, etc.) or posterior (i.e., APIs providing the probability of what movie a user is likely to watch in a streaming platform etc.) knowledge to recover the relations in the graph. One form of graph reconstruction attack is the \emph{link-stealing attack} (also known as edge inference attack), introduced by He et al.~ \cite{he2021stealing}, where an attacker can run queries on a GNN trained on a graph dataset for node classification tasks, recover the prediction probabilities per node (known and referred to as \emph{posteriors}), and run similarity measure-based unsupervised and supervised learning algorithms to predict the existence of an edge between a pair of nodes. Link-stealing attacks have quantitatively shown the vulnerabilities of graph structures’ privacy while used as a dataset for GNN learning. 
Attackers even often use surrogate models trained on another graph from the same domain to replicate the original GNN functionalities and infer the training graph structure.

\noindent \textbf{Prior Defenses.} 
To protect against link-stealing attacks that leverage
node posteriors, several approaches have been proposed based on Differential Privacy (DP). Zhu et al.~\cite{zhu2023blink} introduced link-local differential privacy, where one introduces 
noise in the localized graph's adjacency matrix of decentralized nodes, for training a GNN in a central server without 
revealing the exact existence of edges. Kolluri et al.~\cite{kolluri2022lpgnet} proposed a new Multi-layer perceptron (MLP) based architecture, 
where the edge information is condensed in clustering-ingrained features and fed into MLPs for node classification task, enabling learning on graphs while obfuscating edge information. While the approach by Zhu et al.~\cite{zhu2023blink} achieves defense guarantees in localized settings, a requirement of this defense is that the nodes perturb their adjacency lists before sending them to the server. Also, in the approach by Kolluri et al.~\cite{kolluri2022lpgnet}, edge information is not explicitly transmitted across the network- rather, they are compressed into a latent space representation before being transmitted. In both cases, the original edge information is modified or lost, and thus the expressiveness of the learned representation of the concerned graphs is reduced. 
Wu et al.~\cite{wu2022linkteller}  discussed different \textit{Differentially Private Graph Convolutional Neural Networks (DP-GCN)} mechanisms (\textit{EdgeRand, LapGraph}) for defending against edge inference attacks. While their approach achieves DP guarantee, both of them have to trade off model utility highly ($\epsilon <=1$) to safeguard privacy, or sacrifice privacy to attain higher utility ($\epsilon >=6)$. Also, \textit{EdgeRand} changes the graph density extensively, often creating \emph{out-of-memory} error for large graphs.

\noindent \textbf{Defense Key Insight.}
A key insight in the  DP-based defenses is the injection of noises to defend the graph structure from being inferred. These noises are added in the adjacency matrix of the graphs~\cite{wu2022linkteller}, which can change the sparsity of the graph to a huge extent (\emph{EdgeRand)} or delete some old edges (\emph{LapGraph}). It is to be noted that noise injection can also be a form of attack. Noise-based perturbation as an attack form is referred to as the \emph{poisoning attacks}. Poisoning attacks modify the graph structures at training time by changing the node features or even inserting fake nodes, leading to compromising the learning ability of the GNNs. The key takeaway is that a defense (noise addition) can also be a form of attack. As DP-based approaches' noise addition has been performed on the edge perspective, it now raises a counter-question, \emph{Can a node-based noise addition (i.e., a poisoning attack through changing node features or adding fake nodes) also be a form of defense against link stealing attacks for the Graphs?}



To answer this question, we analyzed a form of graph modification attacks known as \emph{Graph Injection Attacks} (GIA)~\cite{wang2018attack, zou2021tdgia, sun2020adversarial, wang2020scalable}, where the adversary inserts carefully crafted nodes in a graph (i.e., publishing a fake paper to perturb a citation network, etc.) to mislead the prediction of GNNs through the perturbation of posteriors. The intuition is that if a defense mechanism can take the role of attacker and insert fake nodes into the graph on which the GNN will be trained, it can ``counter-attack’’ the attacker. However, it will also lead to the GNN prediction model being sub-optimized for the original task, as learning the graph with perturbed topology will cause the node posteriors to be perturbed, too. So, the problem can posed as a bi-optimization: perturbing the graph structure and GNN posteriors enough to defend against the attackers but also to provide optimal service to the user within a range. Defenses for ML models through bi-level optimization have been proposed by Wu et al.~ \cite{wuefficient}. Interestingly, it works by learning a noise transition matrix for posterior perturbations. However, this defense is tailored against model-stealing attacks for images and does not cover the graph domain. Also, the optimization is done after the learning to hide the model from being inferred, whereas our target is to optimize during training to hide the training data structure.
Moreover, the user task and the attacker task are the same (image classification) in the stated model. In contrast, in our model, the user focus is to learn to predict node classes, and the attacker's focus is on edge inference.

\noindent \textbf{Motivating Example for Our Approach}
For example, in \Cref{fig:nargmotivate}, three nodes $u_a, u_b , u_c  \in V$ have the posterior output vectors for a two-class classification problem as $\mathbf{a} = [0.3,0.7]$, $\mathbf{b} = [0.1, 0.9]$,  \& $\mathbf{c}=[0.4, 0.6]$, respectively, and only node pair (${u_a}, {u_c}$) have an edge between them in their corresponding graph (\textbf{none of the nodes are drawn in the figure as per scale and orientation}). The Euclidean distance between the posteriors are (pairwise): 
$(d_{ab}, d_{bc}, d_{ca}) = (0.28,0.42,0.14)$.
A link-stealing attacker can set a distance upper-bound threshold of $0.2$, and hypothesize correctly that the node pair ${(u_a, u_c)}$ have an edge between them (as only these nodes are not more distant than the threshold). But if the embeddings are perturbed in a manner such that the posteriors become $\mathbf{a^\prime}= [0.13, 0.87]$, $\mathbf{b^\prime}=[0.35, 0.65]$ \& $\mathbf{c^\prime}=[0.45, 0.55]$, then the classes will remain same still (the second entry being always the highest, inferring all have the label $1$ from between $\{0,1\}$), but the Euclidean distance would become $(d_{a^\prime b^\prime}, d_{b^\prime c^\prime}, d_{c^\prime a^\prime}) = (0.31,0.14,0.45)$
Hence if the link stealer has a threshold of 0.2 again, it will likely conclude that there is an edge between node pair ($u_b, u_c$) and the attack is thus thwarted.

\begin{figure}[!htp]
  \centering
  \includegraphics[scale=0.05]
  {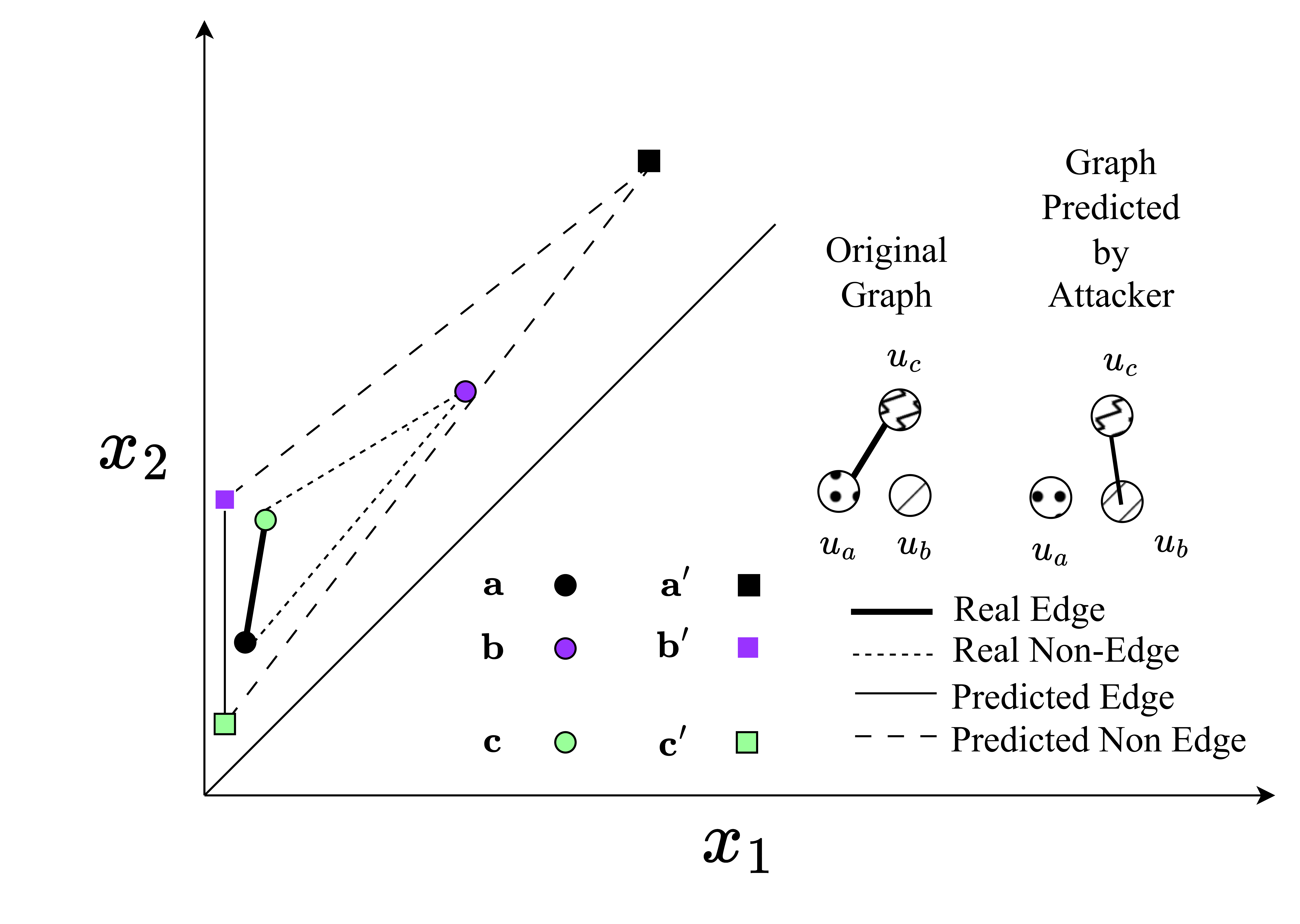}
  \caption{Illustration of Posterior Perturbation in Posterior Simplex Space for Protection against Link Stealing Attacks  
  }\Description{This figure describes an example scenario of how the perturbation of posteriors can be used to defend against Link Stealing Attacks}
  \label{fig:nargmotivate}
\end{figure}
\noindent \textbf{Our Approach.} Our approach focuses on reshaping the graph embedding space in such a way that the posteriors from the GNN model will still provide utility for node prediction but will introduce ambiguity for the link-stealing attackers. As the graph embedding space is constructed through message passing among the nodes, and in a $k-layer$ GNN, this information exchange ranges to the whole $k-$hop neighborhood of a graph; our intuition is that we can augment new nodes with carefully crafted features, whose introduction can perturb the embedding space of a cluster of nodes in their own $k-$neighborhoods. we integrate a form of poison attack in the design of our defense, as our research question demanded. The idea of inserting nodes with crafted features resembles the node injection attack introduced by Zou et al.~\cite{zou2021tdgia}, where the features are learned through optimization instead of being fixed. Besides, to ensure that our new nodes can influence a significant neighborhood in a graph, we apply spectral clustering on the graph first and augment edges from the new node to the center node of the formed clusters. This strategy is different from the one applied by Zou et al.~\cite{zou2021tdgia}, where topologically defected edges are selected under different criteria. Moreover, we introduce a tri-level optimization for learning parameters for our defense model (learnable node features), the model parameters for the target GNN task, and also for a surrogate attacker model which helps to simulate the possible attacks in order to be prepared against. We name our approach as ``\bful{N}ode \bful{A}ugmentation for \bful{R}estricting \bful{G}raphs from \bful{I}nsinuating their \bful{S}tructure'' (\model{}). We evaluate \model{} on three benchmark citation dataset: Cora, Citeseer, Pubmed \cite{yang2016revisiting}.
We also evaluated our approach on LinkTeller \cite{wu2022linkteller}, a state-of-the-art link inference attack. Through our evaluations, we have found some cases where our model and their variants can better the performance of Differential-Privacy (DP) based defenses. Also, we could find out the limitations of our model while defending against some form of attacks.


\noindent \textbf{Contributions.} This paper makes the following technical contributions.

\begin{itemize}
    \item We studied how a form of Graph modification attack can be integrated as a defense for a Graph link-stealing or edge inference attack. To this end, we have introduced \mymodel{}--a node augmentation-based defense for Graph Neural Networks, which can restrict the model from providing posteriors leading to edge inference by attackers, by perturbing the embedding space through \emph{augmenting} nodes and \emph{learning} their features.
    \item We have evaluated our approach on three citation datasets: Cora, Citeseer and Pubmed \cite{yang2016revisiting}, on eight attacking settings stated in \cite{he2021stealing} and on \textit{LinkTeller}~\cite{wu2022linkteller}. We also evaluated node prediction performances on the augmented model. 
    \item We propose that \mymodel{} can be tuned to get near-optimal defensive performances as the DP-based Defenses, with greater fidelity.
    \item We also discussed the cases where the model needs to improve and how different schemes can be used in further investigation to make it perform better.
\end{itemize}

\section{Preliminaries} \label{sec:prel}
 
In this section, we introduce 
preliminary definitions of Graph Learning,
and present the scenarios when 
the privacy of user data, encoded in the graph concerned, are compromised.
We then define the attack our model is trying to defend from.

\subsection{Graph Learning}

\subsubsection{Graph}\label{sec:prel:graph}A graph $G$ is an ordered tuple $(V,E)$ where $V$ is the vertex set and $E$ is the edge set. Edges can be represented by \textit{Adjacency Matrix} $\mathbf{A}$; for a graph with $n$ nodes, i.e., $\lvert V \rvert = n$, $\mathbf{A} \in \{0,1\}^{n\times n}$. Formally, for $\mathbf{A}$, $\mathbf{A}_{ij} = [(v_i, v_j) \in E]$. Node attributes are represented as a matrix $\mathbf{X} \in \mathbb{R}^{n \times d}$ with feature dimension $d \in \mathbb{N}$. The label set $\mathbf{Y} \in \{0,1,\ldots, c-1\}^{n \times 1}$ is a column vector denoting the label of each node (from candidate labels $C = \{0, 1, \ldots, c-1\}$). Thus we can represent a Graph $G$ as an ordered 6-tuple $(V, \textbf{A}, \textbf{X}, \textbf{Y}, n, c)$.
\subsubsection{Node Embedding} An \textit{embedding} function is a mapping from a high dimensional space to a low-dimensional one that can preserve particular properties of the domain space. For a graph $G$, a \textit{Node Embedding} is a function $f: G \rightarrow \mathcal{H} \in \mathbb{R}^{n \times d_e}$, where $d_e << d$ is the \textit{embedding dimension} and $\mathcal{H}$ is the \textit{Embedding space}. For a graph, the most known practice is to find an embedding that preserves the neighborhood similarities or distances among the nodes. 



\subsubsection{Graph Neural Network (GNN) Learning} A \textit{Graph Neural Network (GNN)} is a neural network, which takes a graph $G$ 
as input and generates a node embedding $f:G \rightarrow \mathcal{H}$ for each of its nodes, from the edge relationships. The most popular mechanism for learning these embeddings is called \textit{Message-Passing}. Under this mechanism, in each layer of the \textbf{GNN}, for every single node, two operations are performed: (1) \textbf{Aggregation}: the concerned node's neighbors' node embeddings from the previous layer are collected and combined; (2) \textbf{Update}: the combined neighborhood node embeddings, along with the concerned node's previous layer embedding, are used to calculate the node's embedding for this current layer. In this way, nodes related to each other through multiple hops of the graph can influence each other in the latent embedding space. Formally, for a $L$- layered \textbf{GNN} ($L \in \mathbb{N}$), 
the calculation of
the hidden representation of a node $v \in V$ in the $l \in \{1, 2, \ldots, L-1\}$, denoted as $h_v^{(l)}$, (where $\mathcal{N}_v$ means the neighborhood of $v$ in $G$, i.e. $\mathcal{N}_v = \{u \in V| (v,u) \in E \}$), is executed as follows: 
\begin{align}
    v_{\mathcal{N}}^{(l-1)}  = \text{\module{AGGREGATE}}(
    \mathop{\forall}
    \limits_{\substack{u \in \\ \mathcal{N}_v}}
    h_u^{(l-1)}), h_v^{(l)} = \text{\module{UPDATE}}(h_v^{(l-1)},  v_{\mathcal{N}}^{(l-1)})
\end{align}

Where $\text{\module{UPDATE}}, \text{\module{AGGREGATE}}$ are 
designated differentiable functions. As an initial value, $h_v^{(0)} = \mathbf{X}_v$ is chosen. Notable \textit{Message-passing} \textbf{GNN}s include Graph Convolutional Network (GCN) \cite{kipf2016semi}, Graph Attention Network (GAT) \cite{veličković2018graph}, Graph Isomorphism Network (GIN) \cite{xu2018powerful}, Graph Sample and Aggregate (GraphSAGE) \cite{hamilton2017inductive}. 

In this work, we focus on two of the most important graph learning tasks: (1) \emph{Node Classification}: a node's attached class label is predicted by training on a small subset of labeled nodes, and in the process, node embeddings are learned; and, (2) \emph{Link Prediction}: we learn the embeddings of a node, and for a pair of nodes, we train a neural network or any unsupervised method to find out whether an edge exists between them. The defense side's model is based on node classification, whereas the attacker focuses on the link prediction task.

\subsection{Attacker Model}
We now describe the threat model of our work in terms of the attack model's goal, 
knowledge, 
and capabilities. 
The model in consideration is a GNN, representing a given graph (e.g., a social network or online shopping recommender), trained for the node classification task. At the last layer of the GNN, for each node in the graph, we get a probability distribution (posterior) of the candidate set of node classes. As an example, for a citation network, the nodes representing the works maybe classified as either one from the set \textit{\{Representation Learning, Reinforcement Learning, Meta-Learning\}}, and hence the posterior probability distribution will be a three-dimensional vector representing a simplex.
\subsubsection{Attacker Goal} The attacker will try to infer the graph structure, i.e., finding the existence of an edge between two given nodes. If the attacker has knowledge about the nodes and for each pair of nodes can infer whether there exists an edge between them, then they can eventually reconstruct
the whole graph.
For social or product recommendation networks, these edges can be 
of high interest for malicious parties. The attack goal is neither 
under poison settings (e.g., corrupting the graph), nor under evasion settings (e.g., evading the defense). 
Rather, it is a reconstruction setting- aiming to find out the graph's structure.
\subsubsection{Attacker Knowledge}\label{sec:att_know} The attacker's knowledge can be discussed along two different aspects: the GNN model and the training data/Graph. 

The attacker is completely in the dark about the GNN model parameter and hyperparameters. The attacker can access to the posterior distribution of node classes from the GNN model. 

In case of the data or the graph needed for training, the attacker can face multiple scenarios as stated in~\cite{he2021stealing}, depending on the availability of notable Graph constituents. Namely, they are: (1) \textbf{Target Dataset’s Nodes’ Attributes $\mathcal{F}$}: \emph{The attacker sometimes can have the knowledge about the attributes of the nodes $\mathcal{F}$ and labels of a small subset of data used to train the GNN model.} (2) \textbf{target Dataset’s Partial Graph $\mathcal{A}$}: \emph{A subset of graph edges can also be provided to the attacker, to be used as ground truth edges to train the link inference model} and (3) \textbf{Shadow Dataset $\mathcal{D'}$}: \emph{a shadow dataset  can be provided to facilitate transfer learning to mimic the original graph. The attacker trains a shadow target model from a graph with own nodes and edges, from same or different domain. }
Depending on the presence/absence of these three auxiliaries, $2^3=8$ attacking scenarios has been considered in \cite{he2021stealing}. The background knowledge is represented as an ordered tuple $\mathcal{K} = (\mathcal{F}, \mathcal{A}, \mathcal{D'})$, which can have eight different values ranging from $(\times,\times,\times)$ (i.e., all absent) to $(\mathcal{F},\mathcal{A},\mathcal{D'})$ (i.e., all present). All the possible settings are described in \Cref{tab:attack_config_intro}. Also, in our experimental study (\Cref{sec:exp}), we have described all the attack scenarios  before discussing our model's performance under these settings.




\begin{table}[!htp]
\caption{Attacker Knowledge Configurations for Link Stealing Attacks, as described in \cite{he2021stealing}}
    \label{tab:attack_config_intro}
\centering
\fixtableonecol{6}

\begin{tabular}{cc|c|c|c|cc|c|c|c}
\begin{tabular}[c]{@{}c@{}}Attack \\ No.\end{tabular} & \begin{tabular}[c]{@{}c@{}}Node \\ Attribute,\\ $\mathcal{F}$\end{tabular} & \begin{tabular}[c]{@{}c@{}}Partial \\ Graph,\\ $\mathcal{A}$\end{tabular} & \begin{tabular}[c]{@{}c@{}}Shadow\\ Dataset,\\ $\mathcal{D'}$\end{tabular} & \begin{tabular}[c]{@{}c@{}}Background\\ Knowledge,\\ $\mathcal{K}$\end{tabular} & \begin{tabular}[c]{@{}c@{}}Attack \\ No.\end{tabular} & \begin{tabular}[c]{@{}c@{}}Node \\ Attribute,\\ $\mathcal{F}$\end{tabular} & \begin{tabular}[c]{@{}c@{}}Partial \\ Graph,\\ $\mathcal{A}$\end{tabular} & \begin{tabular}[c]{@{}c@{}}Shadow\\ Dataset,\\ $\mathcal{D'}$\end{tabular} & \begin{tabular}[c]{@{}c@{}}Background\\ Knowledge,\\ $\mathcal{K}$\end{tabular} \\\hline\\
Attack-0                                              & $\times$                                                                   & $\times$                                                                  & $\times$                                                                   & $(\times, \times, \times)$                                                      & Attack-4                                              & $\times$                                                                   & $\checkmark$                                                              & $\checkmark$                                                               & $(\times, \mathcal{A}, \mathcal{D'})$                                           \\
Attack-1                                              & $\times$                                                                   & $\times$                                                                  & $\checkmark$                                                               & $(\times, \times, \mathcal{D'})$                                                & Attack-5                                              & $\checkmark$                                                               & $\times$                                                                  & $\checkmark$                                                               & $(\mathcal{F}, \times, \mathcal{D'})$                                           \\
Attack-2                                              & $\checkmark$                                                               & $\times$                                                                  & $\times$                                                                   & $(\mathcal{F}, \times, \times)$                                                 & Attack-6                                              & $\checkmark$                                                               & $\checkmark$                                                              & $\times$                                                                   & $(\mathcal{F}, \mathcal{A}, \times)$                                            \\
Attack-3                                              & $\times$                                                                   & $\checkmark$                                                              & $\times$                                                                   & $(\times, \mathcal{A}, \times)$                                                 & Attack-7                                              & $\checkmark$                                                               & $\checkmark$                                                              & $\checkmark$                                                               & $(\mathcal{F}, \mathcal{A}, \mathcal{D'})$\\\hline                                     
\end{tabular}
\end{table}

\subsubsection{Attacker Capabilities.} 

 The attacker can only run queries on the GNN model to find out the posteriors of the predicted node classes. Augmented with the background knowledge, as described in \Cref{sec:att_know}, the attacker will try to infer the graph edges. 
\subsubsection{Attack Type} The attacker will try a similarity-based attack by leveraging 
the heuristic that nodes with similar posterior class distribution are expected to have edges between them. This type of attack is called \textbf{link stealing attack} as per the literature~
\cite{he2021stealing} and 
\cite{zhang2023demystifying}. The attacker uses different similarity or distance metrics(e.g., cosine similarity, correlation coefficient) to find similar node-pairs.
Formally, given a black box \textbf{GNN} model $\mathcal{G}$, the training (target) graph nodes $V$, the nodes' posteriors from $\mathcal{G}$ ,the background knowledge $\mathcal{K}$, and two nodes $u, v 
\in V$, the link stealing attack allows one to determine whether there exists an edge in the target graph between $u$ and $v$. 
\section{Challenges and Solutions}\label{sec:chalover}
The design of our approach, based on the idea of
\emph{adding new, fake nodes in the graph with crafted features to perturb the embedding space enough to thwart the link-stealing attacker but be sufficiently accurate enough for the model user},
requires addressing several challenges:
    \textbf{(C1)} \textit{Where should we augment the nodes?} The $k-$layer embeddings of a node depend on its $k-$hop neighbors. Therefore, any change in the embeddings of its neighbor nodes also changes the embedding of the node. Thus, 
    we need to connect new nodes to such nodes in the original graph that are already connected with a significant number of nodes through a minimal number of hops. To address this challenge, we apply \textit{spectral clustering}~\cite{von2007tutorial} to partition the graph into different clusters and find the cluster centers (the node with the least intra-cluster average distance). Then, we add edges from the new nodes to the cluster center. As a cluster center itself is connected to many nodes with one or two hops, connecting an edge from a new node to this node will help to perturb the embeddings of the nodes in its proximity efficiently.
    \textbf{(C2)} \textit{What should be an optimal bound for the number of new nodes?}
    We need to find the optimal number of new nodes so that each of them, influencing their cluster, can collectively perturb the whole graph's embedding and posterior space. We hypothesize that this optimal number depends on the \emph{graph's density, which is roughly inversely proportional} to it. As in a dense graph, more nodes can be connected in one or two hops in message-passing scheme for GNNs due to higher number of edges, less cluster-center nodes are needed to perturb the embedding and posterior spaces of neighborhood nodes. We state and prove the theoretical result below:

    \begin{proposition}
Let the Spectral Clustering Algorithm \cite{von2007tutorial} be applied on an unweighted graph $G=(V,E)$ in such a way that (i) every cluster is equally (approximately) sized in terms of the number of nodes, (ii) each node in a cluster is within the $L-$neighborhood of other nodes in the same cluster for a fixed $L \in \mathbb{N}$, and (iii) the highest degree possible within a cluster for a node is $K$ for a fixed $K \in \mathbb{N}$, then to ensure the mentioned constraints, the number of clusters needed to form, $c$, is inversely proportional to the graph density $\sigma$, i.e. 
\begin{displaymath}
    c \propto \frac{1}{\sigma}
\end{displaymath}
\end{proposition}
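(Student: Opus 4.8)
The plan is to count the edges of $G$ in two ways: to use the three structural constraints to pin down the size of a single cluster in terms of $K$ and $L$, and then to feed that back into a global edge count to expose the dependence on $\sigma$. First I would fix $n=|V|$, $m=|E|$, take $\sigma=\tfrac{2m}{n(n-1)}$ as the density (so the average degree is $\bar d=\sigma(n-1)$), and let the clustering produce $c$ clusters $C_1,\dots,C_c$, which by hypothesis~(i) all have size $s\approx n/c$.

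Step one is to bound $s$ using (ii) and (iii). On the induced subgraph $G[C_j]$, (iii) caps the degree by $K$ and (ii) caps its diameter by $L$, so a breadth-first-search / Moore-type count gives $s\le N(K,L):=1+K\sum_{i=0}^{L-1}(K-1)^{i}$; since the proposition asks for the \emph{smallest} feasible $c$, each cluster is grown as large as the constraints allow, so $s=\Theta\big(N(K,L)\big)$, a quantity depending on $K$ and $L$ alone. Step two ties $K$ to $\sigma$. Writing $m=m_{\mathrm{in}}+m_{\mathrm{out}}$ for intra- versus inter-cluster edges, spectral clustering minimizes the (normalized) cut, so $m_{\mathrm{out}}=o(m)$ and $m\approx\sum_j|E(G[C_j])|$; each summand is $\Theta(Ks)$ by (iii) together with the same tightness remark, and since $cs=n$ this gives $m=\Theta(Kn)$, i.e. $K=\Theta(\bar d)=\Theta(\sigma n)$: the admissible intra-cluster degree tracks the typical degree of $G$. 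Combining the steps, $c=n/s$ with $s$ a fixed function of $K$ and $L$ and $K$ proportional to $\sigma n$; in the regime the later construction actually uses --- clusters that are essentially bounded-radius ego-neighborhoods of their centers, where $s=\Theta(K)$ --- this collapses to $c=\Theta\big(n/(\sigma n)\big)=\Theta(1/\sigma)$, the claimed proportionality, the constant absorbing $n$, $L$ and the fixed parameters. (The purely monotone version needs even less: $s$ is nondecreasing in $K$ and $K$ in $\sigma$, so $c$ is nonincreasing in $\sigma$.)

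The hard part is upgrading the upper bounds to matching $\Theta$ bounds: the Moore inequality and the $\tfrac12 Ks$ intra-cluster edge bound only limit cluster size and internal density from above, so one must argue that the spectral-clustering output genuinely \emph{saturates} them --- I would do this by adopting the balanced, bounded-radius ego-cluster model explicitly, which is exactly the construction used in the rest of the paper --- and one must show the cut term $m_{\mathrm{out}}$ is really lower order, for which I would assume a spectral gap on $G$ and invoke Cheeger / expander-mixing. A smaller point is reconciling ``$K$ fixed'' in the statement with ``$K$ grows with $\sigma$'' in the argument: $K$ is fixed for any given graph, but the sensible choice of $K$ is proportional to that graph's typical degree, and it is this cross-graph dependence that produces $c\propto1/\sigma$.
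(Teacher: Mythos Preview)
Your route diverges from the paper's in a way that matters. The paper keeps $K$ and $L$ as genuine fixed constants throughout and never ties $K$ to $\sigma$. Its argument is a bare double count with trees: each cluster is sandwiched between a spanning tree (lower extreme) and a complete $K$-ary tree of depth $L$ (upper extreme), so both the cluster size $n_c$ and the intra-cluster edge count $e_c=n_c-1$ are constants depending only on $K,L$ (the paper calls these $f_{\min}(K,L,G)$ and $f_{\max}(K,L)$). Globally $n=c\cdot n_c$, and taking only the intra-cluster tree edges plus a spanning tree over the $c$ cluster super-nodes gives $e\approx c\cdot e_c+(c-1)=c\cdot n_c-1$. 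Hence $\sigma\approx e/n^2\approx (c\cdot n_c)/(c\cdot n_c)^2=1/(c\cdot n_c)$, and since $n_c$ is a constant in $K,L$, $c\propto 1/\sigma$. No spectral gap, no Cheeger bound, no cut estimate, no Moore-bound saturation argument.

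Your Step two is the real gap. You argue $K=\Theta(\sigma n)$ and then push this through $s=\Theta(K)$ to get $c=\Theta(1/\sigma)$. But the proposition \emph{fixes} $K$; letting it scale with the graph's average degree contradicts the hypothesis, and you acknowledge as much in your closing paragraph. With $K$ truly constant your chain halts at $c=\Theta(n/K)=\Theta(n)$, not $\Theta(1/\sigma)$ --- unless you additionally force $\sigma\approx 1/n$, which is precisely what the paper obtains (implicitly) by counting only tree edges. That is the mechanism you are missing: the paper's inverse proportionality comes from modeling the clustered graph as essentially a forest of fixed-size trees, so that density and total node count are reciprocally linked, not from letting $K$ absorb the density. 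Your spectral-clustering and expander-mixing machinery is additional effort that does not close this gap once $K$ is held fixed as stated.
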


\begin{proof}
The number of nodes and edges in $G$ are $n=\lvert V \rvert, e = \lvert E \rvert$, respectively. Let the lowest and highest possible number of nodes in a cluster be ${n_c}^{min}$ \& ${n_c}^{max}$, respectively. Then $n$ can be bounded as,
\begin{equation}\label{eq:node}
    {n_c}^{min} * c \leq n \leq {n_c}^{max} * c
\end{equation}
Let the expected number of edges within a cluster to form a minimally connected component following the $L$-neighborhood and $K-$maximum degree assumption is $e_c$. If each cluster is considered as a super-node, and these super-nodes are connected to form minimum-spanning tree of clusters- then for a graph, there are three kinds of edges formed: intra-cluster minimally constrained connected component edges, inter-cluster minimum spanning tree edges, and the extra inter-cluster edges (not necessary to form the tree of clusters). As there are $c$ clusters, there will be $c-1$ inter-cluster minimum-spanning tree edges. We can bound the number of edges $e$ as, 
\begin{equation}\label{eq:edge}
    c * e_c + (c -1) \leq e
\end{equation}
To deduce the lower and upper bound for $e_c$, we consider two cases of the edge structure of the clusters. As a lower bound case, we consider the minimum spanning tree of the nodes in a cluster; and as an upper bound case, we consider the complete $K-ary$ tree of the nodes. For the second case, let the highest degree (intra-cluster) node be the root. Then there will be at most $K$ nodes connected with it in the first level. In the next level, every node will have $K-1$ children (as the highest intra-cluster degree is $K$ and they already have a parent), so there will be $K*(K-1)$ edges. As all the nodes are within an $L-$neighborhood, the tree will have $L$ levels, and the total number of edges will be $K + K*(K-1) + K*(K-1)^2 + \ldots + K*(K-1)^{(L-1)}= \frac{K * \{(K-1)^L-1\}}{K-2} = {f_{max}(K,L)}$. For the first case, we cannot get an explicit formula for the number of edges in terms of $K,L$ as it depends on the graph structure. Nevertheless, we will denote it as $f_{min}(K,L,G)$. As both case denotes a tree, the number of nodes will be 1 more than the number of edges- so ${n_c}^{max} = 1 + {e_c}^{max} = f_{max}(K,L)$,  ${n_c}^{min} = 1 + {e_c}^{min} = 1 + f_{min}(K,L,G)$, and  The density $\sigma$ of a Graph is, 

\begin{equation}\label{eq:density}
    \sigma = \frac{e}{n*(n-1)} \approx \frac{e}{n^2}.
\end{equation}

If every cluster is formed as the first case (minimum spanning tree), then from \Cref{eq:node,eq:edge,eq:density}, the lowest bounded density will be $\sigma_{min} \approx \frac{{e}^{min}}{({{n}^{min}})^2} = \frac{c * {e_c}^{min} + (c -1)}{({n_c}^{min} * c)^2} = \frac{c * ({e_c}^{min} + 1) -1}{({n_c}^{min} * c)^2}=\frac{c * {n_c}^{min}  -1}{({n_c}^{min} * c)^2} \approx \frac{c * {n_c}^{min} }{({n_c}^{min} * c)^2}= \frac{1}{{n_c}^{min} * c} = \frac{1}{c * f_{min}(K,L,G)}$, and the highest bound will be, $\sigma_{max} \approx \frac{{e}^{max}}{({{n}^{max}})^2} = \frac{c * {e_c}^{max} + (c -1)}{({n_c}^{max} * c)^2} = \frac{c * ({e_c}^{max} + 1) -1}{({n_c}^{max} * c)^2}=\frac{c * {n_c}^{max}  -1}{({n_c}^{max} * c)^2} \approx \frac{c * {n_c}^{max} }{({n_c}^{max} * c)^2}= \frac{1}{{n_c}^{max} * c} = \frac{1}{c * f_{max}(K,L)}$ 
\end{proof}

As from the assumptions, the Graph is fixed and so are the values of $K,L$. So, $f_{min}(K,L, G), f_{max}(K,L)$ are both constants. Hence in both cases, $c \propto {\frac{1}{\sigma}}$
    
    Therefore, if two graphs $G_1, G_2$ have densities $\delta_1, \delta_2$, respectively, and defending $G_1$ is achieved optimally with $N$ new nodes, then $G_2$ should be augmented with approximately $\lfloor{\frac{N\delta_1}{\delta2}}\rfloor$ nodes.
    \textbf{(C3)} \textit{How can the competing objectives of optimizing model utility and defending link-stealing attacks be achieved?} The target GNN model's posteriors are optimized for the node prediction task. However, as good as these posteriors are, homophily (edge endpoints having the same labels) is usually found in graph datasets, which exposes their vulnerability in similarity-based attacks. Thus, the posteriors individually have to be optimized but jointly have to be misleading. To address the competing objectives, we introduce a \emph{tri-level optimization} of augmented node features, target GNN parameters, and surrogate attacker model parameters. We optimize augmentation (and GNN parameters), along with surrogate attacker parameters, interchangeably so that one model's gradient feedback updates the other one.   

To summarize, our approach needs functionalities for graph clustering and a multi-level optimization loop to maximize the target model's utility and defensive strength.

\section{Modulewise Detailed Operation of \mymodel{}}\label{sec:methodology}


\begin{figure*}[ht]
\centering
\begin{subfigure}[b]{\textwidth}
\begin{minipage}{\textwidth}
  \includegraphics[width=\linewidth]{}
  
  \caption{Detailed System Design of NARGIS}
  \label{fig:nargis_overview}
\end{minipage}%
\end{subfigure}
\begin{subfigure}[b]{\textwidth}
\begin{minipage}{\textwidth}
  \includegraphics[width=1.1\linewidth]{images/codaspycameretri-optmodule.drawio.pdf}
  
  
  \caption{Tri-Optimization based Feature Learning Model Elaborated (\textbf{GNN-related inputs to Augmented Nodes' Feature, as shown in \Cref{fig:nargis_overview}, are not shown here for the sake of simplicity.})}
  \label{fig:nargis_triopt}
\end{minipage}
\end{subfigure}

\caption{System Overview of \mymodel{}}
\Description{This image describes the system overview of \mymodel{}}
\label{fig:nargis_whole}
\end{figure*}

In this section, we discuss each module separately and together to form the workflow of \mymodel{}. 

\subsection{{Augmentation Module}}
The \module{Augmentation Module} (see \Cref{fig:nargis_overview}), takes the input graph, applies spectral clustering on it, and returns the augmented graph with new nodes and edges. In this module, the new nodes' features are still set at $0$ as they have not been learned yet.


In \mymodel{}, we augment the graph $G$ with new nodes and create some edges from the newly added nodes to the original nodes. Let $n_{new}$ be the number of newly added nodes in the graph $G$ and $V_{new}$ be the set of newly added nodes, then $\lvert V_{new}\rvert = n_{new}$. 
Let the node features and labels of newly augmented nodes be, respectively, $\textbf{X}_{new} \in \mathbb{R}^{n_{new} \times d}$ and $\textbf{Y}_{new} \in  \{0,1,\ldots, c-1\}^{n_{new \times 1}}$. Also, let $\textbf{A}_{new} \in \{0,1\}^{n \times n_{new}}$ be the incidence matrix between the old nodes and newly added ones, with, 

{\small
\begin{align}
\label{eqn:anew}
{[\textbf{A}_{new}]}_{(u,v)} = [\exists \text{ an edge}(u,v) \text{ for } u \in V \text{ and } v \in V_{new}]
\end{align}
}
Let $G_{aug} = (V_{aug}, \textbf{X}_{aug}, \textbf{Y}_{aug}, \textbf{A}_{aug}, n_{aug}, c) $ be the augmented graph, where 

{\small
\begin{align}n_{aug} = n+n_{new},\text{ } V_{aug} = V \cup V_{new},\text{ }\lvert V_{aug} \rvert = n_{aug}\label{eqn:naugvaug}
\end{align} 
\label{eq:anew}
}
%
%
{\small
\begin{align}\textbf{X}_{aug} = \begin{bmatrix}\textbf{X} \\\textbf{X}_{new}\end{bmatrix}, \text{ }\textbf{A}_{aug} = \begin{bmatrix}
\textbf{A} & \textbf{A}_{new} \\{\textbf{A}_{new}^T} & 0\end{bmatrix}, \text{ }\textbf{Y}_{aug} = \begin{bmatrix}
\textbf{Y}_{new}\end{bmatrix}\label{eqn:xaugaugyaug}
\end{align} 
}




Note that we train the \textbf{GNNs} in semi-supervised style 
~\cite{kipf2016semi}, where only a portion of nodes are labeled, and among them, train, validation and test splits are made. Thus, for $\textbf{Y}_{new}$, we randomly set the labels, as they will not be part of any labeled splits. For edge augmentation, we use the Unnormalized Laplacian Based Algorithm for Spectral Clustering~\cite{von2007tutorial}, which finds the graph clusters and their center nodes. We connect each new node with separate cluster centers to form the new edges.
\begin{func}[]
\renewcommand{\arraystretch}{1}
 \fontsize{8}{8}\selectfont
\SetKwInOut{Input}{Input}
\SetKwInOut{Output}{Output}
\SetKwInOut{Parameter}{Parameter}

\SetKwComment{Comment}{/* }{ */}
\RestyleAlgo{ruled}

\Input{Graph $G = (V, \textbf{X}, \textbf{Y}, \textbf{A}, n, c);$\\}
\Parameter{Number of new nodes to augment, $n_{new}$;\\}
\Output{Augmented Graph $G_{aug} = (V_{aug}, \textbf{X}_{aug}, \textbf{Y}_{aug}, \textbf{A}_{aug}, n_{aug}, c)$ with no features learned yet;}

Cluster Center Nodes $[o_1, o_2, \ldots, o_{n_{new}}] \gets$ Spectral Clustering on $G$ as per the algorithm in~\cite{von2007tutorial}\; 
$V_{new} \gets \{v_1, v_2, \ldots v_{n_{new}}\},$ New nodes with zeroes set as feature\;
Form edge set $E_{new} \gets \{(o_1, v_1), (o_2,v_2), \ldots, (o_{n_{new}}, v_{n_{new}})\}$\;
$X_{new} \gets \{\mathbf{0}, \mathbf{0}, \mathbf{0}, \ldots, \mathbf{0}\}, $ $n_{new}$ zero feature vectors\;
$\mathbf{A}_{new} \gets $ form adjacency matrix as per \Cref{eqn:anew} using $V, V_{new} \text{ \& } E_{new}$\;
$\mathbf{Y}_{new} \gets n_{new}$ randomly assigned labels from ${0,1,\ldots, c-1}$\;
Form augmented graph $G_{aug} = (V_{aug}, \textbf{X}_{aug}, \textbf{Y}_{aug}, \textbf{A}_{aug}, n_{aug}, c)$ using \Cref{eqn:naugvaug,eqn:xaugaugyaug} \; 
\Return{$G_{aug}$}\;
\caption{getAugmentedGraph$(G)$}\label[func]{func:getAugGraph}
\end{func}
~\Cref{func:getAugGraph}  represents the functionality of the \module{Augmentation Module}.

\subsection{Tri-Optimization based Feature Learning Module} 
\subsubsection{Module Workflow Summary}\label{sec:triopt-summary}
The \module{Tri-Optimization based Feature Learning Module} is the most crucial module.
Its goal is to learn the augmented nodes' features so that their existence in a GNN learning step will perturb the embedding and posterior space of the original graph's nodes. This module takes both the unlearned augmented graph (from the \module{Augmentation Module}) and main graph (training) edges and their labels (existent/non-existent) as inputs. The module consists of our target GNN model (\module{Feature Learning Sub-Module}) and a surrogate attacker model (\module{Surrogate Attacker Sub Module})- connected through a tri-optimization-based learning loop (see  \Cref{fig:nargis_overview}). The detailed process is shown in \Cref{fig:nargis_triopt}. The target GNN model is trained for the node prediction task in the augmented graph in \textbf{Stage 1}. Consequently, the learned posteriors are sent to the surrogate attacker model, which combines the surrogate query edges with them to form an edge feature and then the attacker model is trained to infer the existence of the edge (\textbf{Stage 2}). Gradient feedback from this model is sent again to the \module{Feature-Learning sub-module} so that the augmented nodes' features are updated through gradient descent, whereas the GNN parameters are kept frozen (\textbf{Stage 3}). Finally, the module returns the learned features of the augmented nodes. It is to be noted that the \module{Surrogate Attacker Sub-Module} is trained only once, unlike the \module{Feature Learning Sub-Module} which is trained several times. Our intuition behind this learning strategy is that the attacker's model is targeted to learn the implicit bias that similar nodes from the edges mainly- and in the first training of the \module{Feature Learning Sub-Module}, similar nodes' posteriors are trained to be similar. Hence, the surrogate attacker takes those posteriors to integrate the inductive bias regarding the homophily. But in later part of the trainings, when the \module{Feature Learning Sub-module} also gets gradient feedback to perturb the embedding and posterior space for the defense against attacks, training the surrogate attackers on those perturbed posteriors again will lead the attacker to be adaptive, which is not under our threat model settings. Making the surrogate attacker adaptive will unnecessarily weaken the defense's power, as no matter what gradient update is done in the \module{Feature Learning Sub-module}, the \module{Surrogate Attacker Sub-module} will always have an answer for that and the defensive objective will not be achieved. This procedure is elaborated in \Cref{func:runtriopt}. Finally, this module returns the perturbed posteriors.

\subsubsection{Module Workflow Details}\label{sec:triopt-details}
\textbf{In the subsequent discussions, we use $\mathcal{D}_{Q}$ to denote the training edges, both positive/existent and sampled negative/non-existent edges from the main graph.}

{\small
\begin{align}
 \mathcal{D}_Q = \{(u, v, y)| u \in V, v \in V, y = A[u][v]\}    
\end{align}\label{eq:dq} 
}
\paragraph{Target GNN Model}\label{sec:targetgnn}
Let $f(G; \theta_f): G \rightarrow [0,1]^{n \times c}$ be the \textit{target model}, which takes a graph $G$ as input and outputs the probability distribution of the nodes' labels (i.e., posteriors). This is the model \mymodel{} defends. \emph{The set of learnable model parameters are denoted as $\theta_f$} (they include both the embedding and classification layer parameters).

\paragraph{Augmented Target GNN Model} 
Let $f_\tau(G_{aug}; {\theta_{f_{\tau}}}, \theta_\tau): G_{aug} \rightarrow [0,1]^{n_{aug} \times c}$ be the \textit{augmented target GNN model}, which takes the augmented graph $G_{aug}$ as input and outputs the perturbed probability distribution (posteriors) of the nodes' labels. This model is used exclusively to learn the augmented node features. \emph{The set of learnable target GNN model parameters are denoted as ${\theta_{f_\tau}}$}, and \emph{the associated learnable augmented node features are: $\theta_\tau$}.


\paragraph{Edge Feature Generator}\label{sec:edgefeature} 
This procedure takes the node posteriors from the target GNN model and surrogate query edges to form the edge features for the \textit{surrogate attacker model}. 
Formally, let $\mathbf{\chi} = f_\tau(G_{aug}; {\theta_f}, \theta_\tau)$ be the node posteriors from \textit{the augmented target model}. Then, the \textit{Edge Feature Generator} can be denoted as a function $k(\mathbf{\chi}, \mathcal{D}_{Q}) \rightarrow \mathbb{R}^{\lvert \mathcal{D}_{Q} \rvert \times d_a}$, where $d_a$ be \textit{the surrogate attack model} input dimension.

\paragraph{Surrogate Attack Model}\label{sec:surrattackmodel}
We refer to $g(u,v, k(\mathbf{\chi}, \mathcal{D}_{Q}); \theta_g)\rightarrow [0,1]$ 
as \textit{Surrogate Attack Model} for link prediction between two nodes $u,v$ in a (non-)existing edge in $\mathcal{D}_{Q}$. \emph{$\theta_g$ denotes the set of learnable parameters of the model}. It takes the surrogate dataset and the formed edge features as input for learning.


\paragraph{Module Optimization Objective} 
The goal is to learn the features of the augmented nodes. As the aim of any Learning-based model's defense is two-fold (providing utility while defending from attacks), to assist the learning of features, we learn the target model for the node classification task (to help the learnable nodes improve the utility) as well as a surrogate attacker model (to help the learnable nodes be robust against attacks).



Formally, let the node classification loss of the aforementioned \textit{Augmented Target GNN model} $f_{\tau}(.)$ be  $\mathcal{L}_{class}(G_{aug};\theta_{f_\tau}, \theta_{\tau})$. We also introduce an edge classification loss of the \textit{Surrogate Attack Model} $g(.)$ as  $\mathcal{L}_{attack}(\mathcal{D}_{Q}, \chi;\theta_{g})$. 
Lastly, based on all those learnable parameters, our loss of interest is the loss dedicated to the node augmentation-based defense, $\mathcal{L}_{defense}(\mathcal{D}_{Q}, G_{aug};\theta_g,\theta_{f_\tau}, \theta_{\tau})$. The detailed definition of the loss functions are given in what follows.

\paragraph{Classification loss}
For \textit{augmented target GNN model}'s classification loss $\mathcal{L}_{class}$, we use the negative log-likelihood loss: 

{\small
\begin{align}
\mathcal{L}_{class}(G_{aug}; \theta_{f_\tau}, \theta_{\tau}):= \mathbb{E}_{u \sim V}[-log([f_\tau(G_{aug}; \theta_{f_\tau, \tau})]_{(u,\mathbf{Y}_{u})})]
\label{eqn:classloss}
\end{align}
}

Here, we obtain the posteriors from the \textit{augmented target GNN model}, and for each node-label pair $(u,\mathbf{Y}_{u}) \in V_{aug} \times \mathbf{Y}_{aug}$ of the graph $G_{aug} = (V_{aug}, \textbf{X}_{aug}, \textbf{Y}_{aug}, \textbf{A}_{aug}, n_{aug}, c) $, we take the probability value of the true node label, and take the expected value of negative log-likelihood over all the nodes in the augmented graph that belongs actually to the original graph (as $V \subset V_{aug}$).

\paragraph{Surrogate Attacker Loss}
For the \textit{surrogate attack model}'s link prediction loss, we use the binary cross-entropy loss: 

{\small
\begin{align}\begin{split}
    \mathcal{L}_{attack}(\mathcal{D}_{Q}, \mathbf{\chi}; \theta_{g})=\mathcal{L}_{attack}(\mathcal{D}_{Q}, G_{aug}; \theta_{g},\theta_{f_\tau}, \theta_{\tau})\\:=\mathbb{E}_{(u,v,y) \sim \mathcal{D}_Q}[\mathcal{L}_{BCE}(g(u,v, k(\mathbf{\chi}, \mathcal{D}_{Q}); \theta_g), y)]
    \label{eqn:lossatk}
\end{split}
\end{align}
}

where $g(u,v, k(\mathbf{\chi}, \mathcal{D}_{Q}); \theta_g)$ denotes the probability of an edge between nodes $u,v \in V$ from the surrogate model $g$ by the attacker 
and $y$ is a binary variable indicating whether there is an edge between them. Remember that as $\chi$ is dependent on $G_{aug}, \theta_{f_\tau}, \theta_{\tau}$, this loss (and other losses stated later) depends on them too.

\paragraph{Defender Loss}
For the defender, constructing the loss is non-trivial, as we have to integrate different competing objectives. We construct four different losses for aggregating the \textit{Defender loss}. 

\noindent \textbf{(1) Distribution Alignment Loss.} First, our goal is that the perturbed posterior distribution from the augmented model must be quite different from the original GNN's posterior distribution so that a similarity-metric-based attacker gets confused. So, we have to integrate a loss, which will try to maximize the difference between the posteriors $\mathbf{\chi}_u, \mathbf{\chi}_v$ for the nodes $u,v$, respectively, in an existent edge $(u,v)$.
The \textit{Distribution Alignment loss for Defender} is defined as 

{\small
\begin{align}
\begin{split}
    \mathcal{L}_{dist}(\mathcal{D}_{Q}, \chi) = \mathcal{L}_{dist}(\mathcal{D}_{Q}, G_{aug}; \theta_{f_\tau}, \theta_{\tau})\\
    := \mathbb{E}_{(u,v,y=1) \sim \mathcal{D}_Q}[-ShanonDiv( \mathbf{\chi}_u || \mathbf{\chi}_v )]
\end{split}\label{eqn:distloss}
\end{align}
}

Here, $ShanonDiv(.)$ is the Jensen-Shanon Divergence~\cite{shanon}, which is a symmetric version of the Kullback-Leibler (KL) Divergence. We use Jensen-Shanon instead of KL, to make the distribution loss symmetric for the edge endpoints. 

\noindent \textbf{(2) Correlation Distance Loss.} Second, to deviate the perturbed posterior distribution from the augmented model more from the original GNN's posterior distribution, we integrate another loss, which will try to minimize the correlation similarity between the posteriors $\mathbf{\chi}_u, \mathbf{\chi}_v$ for the nodes $u,v$, respectively, in an existent edge $(u,v)$. Whereas the $\mathcal{L}_{dist}$ is optimized to make the posterior vectors dissimilar in probability simplex, this correlation-based loss is optimized for the same task in the vector space. 
The \textit{Correlation Distance loss for Defender} is defined as: 

{\small
\begin{align}
\begin{split}
    \mathcal{L}_{corr}(\mathcal{D}_{Q}, \chi) = \mathcal{L}_{corr}(\mathcal{D}_{Q}, G_{aug}; \theta_{f_\tau}, \theta_{\tau})\\
    := \mathbb{E}_{(u,v,y=1) \sim \mathcal{D}_Q}[-\{1-\frac{\mathbf{\bar{\chi}}_u \cdot \mathbf{\bar{\chi}}_v}{{\lvert\lvert (\mathbf{\chi}_u - {\mathbf{\bar{\chi}}_u)} \rvert\rvert}_2{\lvert \lvert(\mathbf{\chi}_v - {\mathbf{\bar{\chi}}_v)} \rvert \rvert}_2}\}]
\end{split}\label{eqn:corrloss}
\end{align}
}

Here, we have described the formula for adjusted cosine similarity which is used interchangeably with correlation similarity. To convert it into a distance, we deduct the quantity from 1.

\noindent \textbf{(3) Alignment Calibration Loss.} Thirdly, while optimizing for the $\lambda_{dist}$, the original posterior can be perturbed in such a way that the main task can be derailed, i.e., misclassification happens. This happens if, after the perturbation, the ground truth label class's probability is not higher than the maximum one. So, we have to integrate an alignment-based loss, too, which ensures that only the class probabilities not representing the ground truth class are perturbed. As the attacker's aim is not the classification of nodes, this does not decrease much the strength of the defense.
Therefore, \textit{Alignment Calibration loss for defender} is defined as

{\small
\begin{align}
   \begin{split}
       \mathcal{L}_{align}(\mathcal{D}_{Q}, \mathbf{Y}, \mathbf{\chi})=\mathcal{L}_{align}(\mathcal{D}_{Q}, G_{aug}; \theta_{f_\tau}, \theta_{\tau})\\:= \mathbb{E}_{(u,v,y) \sim \mathcal{D}_Q}[-log({[\mathbf{\chi}_u}]_{\mathbf{Y}_{u}})
    - log({[\mathbf{\chi}_v}]_{\mathbf{Y}_{v}})]\end{split}\label{eqn:alignloss}
\end{align}
}

\noindent \textbf{(4) Misclassification loss.} Last, remember that merely perturbing the posterior vector is not our aim. We have to ensure that the attacker cannot predict the links' existence. So, we have to perturb in such a way that the posteriors from the model fool the surrogate model enough to misclassify the links.  So, we define the \textit{misclassification loss by the attacker for defender} as: 

{\small
\begin{align}\begin{split}
\mathcal{L}_{miss}(\mathcal{D}_{Q}, \chi; \theta_{g})=\mathcal{L}_{miss}(\mathcal{D}_{Q}, G_{aug}; \theta_{g},\theta_{f_\tau}, \theta_{\tau})\\:=\mathbb{E}_{(u,v,y) \sim \mathcal{D}_Q}[-log(1- 
    g(u,v, k(\mathbf{\chi}, \mathcal{D}_{Q}); \theta_g)_y)] 
    \label{eqn:missloss}
\end{split}\end{align}
}%

Finally, taking the linear combination of the losses, we have the \textit{Defender Loss},

{\small
\begin{align}\begin{split}
   \mathcal{L}_{defense}(\mathcal{D}_{Q}, G_{aug}; \theta_{g},\theta_{f_\tau}, \theta_{\tau})\\:=\begin{bmatrix}
\lambda_{miss}\\ \lambda_{align}\\ \lambda_{dist} \\ \lambda_{corr}\end{bmatrix}^T \begin{bmatrix}
    \mathcal{L}_{miss}(\mathcal{D}_{Q}, G_{aug}; \theta_{g},\theta_{f_\tau}, \theta_{\tau})\\
    \mathcal{L}_{align}(\mathcal{D}_{Q}, G_{aug}; \theta_{f_\tau}, \theta_{\tau})\\
    \mathcal{L}_{dist}(\mathcal{D}_{Q}, G_{aug}; \theta_{f_\tau}, \theta_{\tau})\\
    \mathcal{L}_{corr}(\mathcal{D}_{Q}, G_{aug}; \theta_{f_\tau}, \theta_{\tau})
\end{bmatrix} 
\end{split}\label{eqn:defloss}
\end{align}
}

where $\lambda_{miss}, \lambda_{align},\lambda_{dist}, \lambda_{corr}$ are hyperparameters to be set on.

\paragraph{Tri-Optimization Based Learning Loop}\label{sec:trioptloop} At first, we need to learn the parameters $\theta_{f_\tau}$ for the primary task of node classification. Then we learn the surrogate model parameters $\theta_g$ to measure up the optimal performance of the attacker, which the defense has to match. After this, we learn the augmentation parameters $\theta_\tau$ to step up against the optimized attacker. 
The tri-optimization objective can be formalized as follows: 
{\small
\begin{align}
\begin{split}
    \operatorname{argmin}_{\theta_{\tau}} \mathcal{L}_{defense}(\mathcal{D}_{Q},G_{aug};\theta_g^*,\theta_{f_\tau}^*, \theta_{\tau})\\
    s.t. \theta_{g}^* = \operatorname{argmin}_{\theta_{g}} \mathcal{L}_{attack}(\mathcal{D}_{Q},G_{aug};\theta_{g},\theta_{f_\tau}^*, \theta_{\tau})\\
    s.t. \theta_{f_\tau}^* = \operatorname{argmin}_{\theta_{f_\tau}} \mathcal{L}_{class}(G_{aug};\theta_{f_\tau}, \theta_{\tau})
\end{split}\label{eqn:trioptobj}
\end{align}
}
\Cref{func:runtriopt}  shows the procedure for updating the model parameters. We change the learning step a bit from what was stated before. We update the node augmentation parameters $\theta_\tau$ twice: once after classification (different from \Cref{sec:triopt-summary}, ``semi''-update) and again at the last step after the surrogate model (as stated in \Cref{sec:triopt-summary}, ``final''-update). As the node augmentation parameters also take part in the classification task, the loss classification gradients should also be 
propagated through them. Otherwise, it would downgrade the classification performance of the augmented model. We also pass some epoch-related hyperparameters to run the feature (semi and final) and surrogate updates. Besides,
as a learnable feature layer abstraction, we introduce a vector of the same size as $\text{\textit{number of new nodes}} \times \text{\textit{Feature Dimension}}$. The augmented features are always set as zero. So, before passing to the GNN, we add the learnable features to the fixed augmented zero features. Then, they are concatenated with the fixed original node features before being passed to the model. Hence, backpropagation is done during the addition operation of the learnable layer. The whole optimization loop is shown in \Cref{fig:nargis_triopt}.

\begin{func}[]
\caption{runTriOptimizationBasedFeatureLearning$(G_{aug}, \mathcal{D}_Q)$}\label[func]{func:runtriopt}
\renewcommand{\arraystretch}{1}
 \fontsize{8}{8}\selectfont
\SetKwInOut{Input}{Input}
\SetKwInOut{Output}{Output}
\SetKwInOut{Parameter}{Parameter}
\SetKwComment{Comment}{/* }{ */}
\RestyleAlgo{ruled}
\Input{Augmented Graph $G_{aug} = (V_{aug}, \textbf{X}_{aug}, \textbf{Y}_{aug}, \textbf{A}_{aug}, n_{aug}, c)$ with no features learned yet;\\Surrogate Query Edge Dataset $\mathcal{D}_{Q}$;\\}
\Parameter{number of epochs $\eta_{outer}, \eta_{class}, \eta_{surr}, \eta_{def}$;\\}
\Output{Learned Augmented Graph, $G_{aug}^{\tau} = (V_{aug}, \textbf{X}_{aug}^{\tau}, \textbf{Y}_{aug}, \textbf{A}_{aug}, n_{aug}, c)$;}
Initiate $\theta_g^{0},  \theta_{f_\tau}^{0}, \theta_{\tau}^{0}$\;
\For{$t=0$ to $\eta_{outer}-1$}{
\Comment{\textbf{Stage 1}: Learn Classification Model}
${{\hat{\theta}}_{f_\tau}^{0}}, {\hat{\theta}}_{\tau}^{0} \gets \theta_{f_\tau}^{t}, \theta_{\tau}^{t}$ \;
\For{$t_{c}=0$ to $\eta_{class}-1$}{
Calculate $\mathcal{L}_{class}(G_{aug}; {{\hat{\theta}}_{f_\tau}^{t_{c}},} {{\hat{\theta}}_{\tau}^{t_c})}$ using \Cref{eqn:classloss}\;
$ {{\hat{\theta}}_{f_\tau}^{t_{c}+1}}, {{\hat{\theta}}_{\tau}^{t_{c}+1}} \gets $ Update using back-prop from $\mathcal{L}_{class}(G_{aug}; {{\hat{\theta}}_{f_\tau}^{t_{c}},} {{\hat{\theta}}_{\tau}^{t_c})}$\;}
\Comment{\textbf{Feature Learner Semi-Update}}
${\theta_{f_\tau}^{t+1}}, {\theta_{\tau}^{t+\frac{1}{2}}}  \gets {{\hat{\theta}}_{f_\tau}^{\eta_{class}}},{{\hat{\theta}}_{\tau}^{\eta_{class}}} $\; 
\If{$t == 0$}{\Comment{\textbf{Stage 2}: Learn Surrogate Attacker Model}
${{\hat{\theta}}_{g}^{0}} \gets \theta_{g}^{t}  $ \;

\For{$t_{s}=0$ to $\eta_{surr}-1$}{
Calculate $\mathcal{L}_{attack}(\mathcal{D}_{Q},G_{aug};{{\hat{\theta}}_{g}^{t_s}},\theta_{f_\tau}^{t+1}, \theta_{\tau}^{t+\frac{1}{2}})$ using \Cref{eqn:lossatk}\;
${{\hat{\theta}}_{g}^{t_s+1}}\gets $ Update using back-prop from $\mathcal{L}_{attack}(\mathcal{D}_{Q},G_{aug};{{\hat{\theta}}_{g}^{t_s}},\theta_{f_\tau}^{t+1}, \theta_{\tau}^{t+\frac{1}{2}})$\;}
\Comment{\textbf{Surrogate Attacker Update}}
$\theta_{g}^{t+1} \gets{{\hat{\theta}}_{g}^{\eta_{surr}}}$\;}
\Else{$\theta_{g}^{t+1} \gets \theta_{g}^{t}$}

\Comment{\textbf{Stage 3} Learn Augmented Features}
${{\hat{\theta}}_{\tau}^{0}}\gets \theta_{\tau}^{t+\frac{1}{2}} $\;
\For{$t_{d}=0$ to $\eta_{def}-1$}{
Calculate $\mathcal{L}_{defense}(\mathcal{D}_{Q}, G_{aug};\theta_{g}^{t+1},\theta_{f_\tau}^{t+1},{{\hat{\theta}}_{\tau}^{t_d}})$ using \Cref{eqn:defloss}\;
${{\hat{\theta}}_{\tau}^{t_d+1}}\gets $ 
Update using back-prop from 
$\mathcal{L}_{defense}(\mathcal{D}_{Q}, G_{aug};\theta_{g}^{t+1},\theta_{f_\tau}^{t+1},{{\hat{\theta}}_{\tau}^{t_d}})$\;}
\Comment{\textbf{Feature Learner Final Update}}
$\theta_{\tau}^{t+1} \gets{{\hat{\theta}}_{\tau}^{\eta_{def}}}$\;

}
\Comment{$\theta_{\tau}^{\eta_{outer}}$ is the learned augmented node features}
$\mathbf{X}_{aug}^{\tau} \gets $ replace $\mathbf{X}_{new}$ from $\mathbf{X_{aug}}$ in \Cref{eqn:xaugaugyaug} with $\theta_{\tau}^{\eta_{outer}}$\;
\Return{$G_{aug}^{\tau} = (V_{aug}, \textbf{X}_{aug}^{\tau}, \textbf{Y}_{aug}, \textbf{A}_{aug}, n_{aug}, c)$}

\end{func}

\subsection{Algorithm: \mymodel{}}
Combining all the modules and functions stated above, we train \mymodel{}. The training algorithm has been shown in \Cref{alg:nodeaug} (and also for reference in \Cref{fig:nargis_overview}). 
\begin{algorithm}[!htp]
\renewcommand{\arraystretch}{1}
 \fontsize{8}{8}\selectfont
\renewcommand{\arraystretch}{1}
 \fontsize{8}{8}\selectfont
\SetKwInOut{Input}{Input}
\SetKwInOut{Output}{Output}
\SetKwInOut{Parameter}{Parameter}
\SetKwComment{Comment}{/* }{ */}
\SetKwFunction{getaug}{getAugmentedGraph}
\SetKwFunction{gensur}{generateSurrogateEdgeQueryDataset}
\SetKwFunction{triopt}{runTriOptimizationBasedFeatureLearning}
\SetKwFunction{perturb}{learnPerturbedGraphEmbedding}
\RestyleAlgo{ruled}

\Input{Graph $G = (V, \textbf{X}, \textbf{Y}, \textbf{A}, n, c)$\;\\} 
\Parameter{Number of new nodes to augment, $n_{new}$;\\Set Threshold $\zeta$;\\number of epochs $\eta_{outer}, \eta_{class}, \eta_{surr}, \eta_{def}$\;}
\Output{Perturbed Node Posteriors, $\Phi \in [0,1]^{n\times c}$}
\Comment{\textbf{Augmentation Module}-\Cref{func:getAugGraph} }
Augmented Graph, $G_{aug} \gets $\getaug$(G)$\;

Training Edge Dataset, $\mathcal{D}_{Q} \gets $ Edges, sampled negative edges and edge labels from $(G)$\;
\Comment{\textbf{Tri-Optimization based Feature Learning Module}-\Cref{func:runtriopt} }
Augmented Graph with Learned Features, $G_{aug}^{\tau}\gets$ \triopt$(G_{aug}, \mathcal{D}_Q)$\;
\Comment{\textbf{Get Perturbed Posteriors from the Defense ingrained GNN}}
$\Phi_{aug} \gets f_{\tau}(G_{aug}^{\tau})$\;
\Comment{\textbf{Return only the main graph part}}
Perturbed (Main Graph) Node Posteriors, $\Phi \gets \Phi_{aug}[ :n, :]$\;
\Return{$\phi$}
\caption{\mymodel{}}\label{alg:nodeaug}
\end{algorithm}

\section{Evaluation}\label{sec:exp} In this section, we describe the datasets, models, attacks and metrics to evaluate our model's performances.
\subsection{Experimental Setup} 

\subsubsection{Datasets} We evaluate our models and attacks on three popular Graph datasets: Cora, Citeseer, Pubmed \cite{yang2016revisiting}. These datasets are citation datasets used for benchmarking GNN models \cite{kipf2016semi}. The nodes of the datasets represent publications, and links denote their citations. Cora and Citeseer node features are 0/1 vectors representing the absence/presence of a particular word or tag in a fixed dictionary. Pubmed features are weighted TF-IDF vectors for a vocabulary.

\noindent \textbf{Dataset Configuration for Our Learning:} For training each dataset for node classification, we have followed the train, validation, test split settings from \cite{kipf2016semi}, where (for example,) for training the Cora Dataset only 20 nodes per class were used, whereas the dataset has 2,708 nodes. While training \mymodel{} and evaluating the attacks, where the edges form an important part of the training, we at first split the graph based on 70-10-20\% train-validation-split of the nodes, and then on the split graph, perform edge split on the positive edges with the same ratio. As stated in \cite{kipf2016semi, he2021stealing}, we sampled the same number of negative edges for each split. We trained \mymodel{} on the train edges (positive and negative), validated on validation edges (positive and negative), and tested the attacks on test split edges (positive and negative). All the graphs are trained on transductive settings, where some of the nodes are labeled and training is done on them, while node classification is done on the rest unlabeled nodes.

\subsection{Software and Hardware Settings} We have used Pytorch (\cite{paszke2019pytorch}), Pytorch Geometric 
(\cite{fey2019fast}) and NetworkX (\cite{hagberg2008exploring}) for the implementation. We ran our experiments on a server with 3 NVIDIA GeForce RTX 3090 GPUs, totaling 72 GB.

\subsection{Node Classification GNN models and hyperparameters} As GNN node classification model, we have used three message-passing GNNs for evaluation: GCN \cite{kipf2016semi}, GAT \cite{veličković2018graph} and SAGE  \cite{hamilton2017inductive}. All of the models have two convolution layers with their respective message-passing mechanisms. The dimension of the convolution layers are $16$. We also used dropout layers after each convolution layer with $p=0.5$. The first convolution layers had ReLU activation, and the last convolution layer, which is also the output layer, had softmax activation. Every GNN were trained for 200 epochs with ADAM 
\cite{adam} optimizer, with learning rate 0.005 and weight decay rate of $5 \times 10 ^{-4}$. After every 10 epochs, we ran validation to find the validation loss, and the model with the least validation loss was saved.

\subsection{Cluster Numbers}
Previously, in \Cref{sec:chalover}, we have discussed that the cluster number has to be inversely proportional to the Graph Density. We have calculated the graph density of Cora, Citeseer and Pubmed as $ 0.00144, 0.000823, 0.000228$, respectively. Their inverse ratios are: $1:1.75:6.3 = 10:17.5:63$. So we set cluster numbers (nearby rounded as multiples of 10) 10, 20, 60 for Cora, Citeseer and Pubmed, respectively.

\subsection{{Tri-Optimization Based Module's Hyperparameters}} For every GNN model with basic (unguarded) settings, the corresponding augmented GNN model also had the same configuration and convolution layers. The number of epochs $\eta_{outer}, \eta_{class}, \eta_{def}$ in \Cref{func:runtriopt} in \Cref{sec:methodology}, are set as $10, 200,50$. For surrogate learning, we set the batch size of 512 for training edge set and set $\eta_{surr}$ as the number of epochs needed to learn the whole set of training edges. Classification, surrogate, and defense Learning models are optimized using Adam \cite{adam} with learning rate 0.01, 0.001 and 0.001, respectively. Both classification and defense learning have weight decay of $5\times 10 ^{-4}$. The surrogate attacker model is a learnable matrix of dimension $\text{\textit{class number}} \times \text{\textit{hidden dimension}}$ (set as 10). We multiply the posteriors from the endpoints with the matrix to get two vectors, and then we take the sigmoid of their dot product.
The loss hyperparameters $\mathcal{\lambda}_{misc}$, $ \mathcal{\lambda}_{align}$, $\mathcal{\lambda}_{dist}$ and $ \mathcal{\lambda}_{corr}$ are set as 4, 0.8, 2 and 0.6 through grid search, respectively. 

\subsection{Attack Models}
For the Attack Models, we have used the same settings as described in \cite{he2021stealing}, for supervised attacker, reference models, shadow GNN and shadow reference models. The supervised attacker Model is an MLP with three hidden layers of 32 dimension, which is trained for 50 epochs with learning rate 0.005 with Adam. Both the reference and shadow reference model for the Attacker are MLPs with two hidden layers with 64 and 32 nodes, which is trained for 50 epochs with Adam optimizer of learning rate 0.01 and weight decay as same. The main and shadow reference models are used for the formation of features for the edges from the posteriors of the nodes from the main and shadow dataset, respectively. The shadow GNN model for learning on shadow dataset is a GCN, which has the same architecture as the classification model, except the hidden dimension is 32 and it is trained for 100 epochs. Shadow GNN is trained to apply transfer learning in four forms of attacks (Attack-1, 4, 5 and 7).

\subsection{How to Interpret the Experiments}
Combining \Cref{tab:single,,tab:transfer}, we have discussed about ten tasks: (1) Node prediction, (2-9) eight forms of Link Stealing Attacks, (10) LinkTeller Attack. For the first task, we want the performance under the defense model to be decreased as low as possible. For the rest of the nine tasks, we want the performance of the attack against the defense model to be decreased as much as possible.

A good defensive model has higher node prediction accuracy (indicating lower fidelity loss) and lower AUC (higher attack thwarted).

We also calculated a loss concerning each metric's Baseline (Undefended) model. For accuracy, the loss w.r.t. the baseline model should be as low as possible. For AUC, the loss w.r.t. the baseline model should be as high as possible. 

The models marked with a long cross \cross{} are tuned for the highest fidelity/node prediction accuracy. So, their tradeoff is lower privacy (Higher AUC). In contrast, the models marked with an asterisk (*) are tuned for the highest privacy (lower AUC), where the tradeoff is possibly lower fidelity. 

\subsection{Metrics} For the node prediction task, we have used accuracy as our performance metric. For link stealing task evaluation, as per the works of \cite{he2021stealing, wu2022linkteller, kolluri2022lpgnet, zhu2023blink}, we have used AUC (Area under ROC Curve) metric. This metric is used for two reasons: (1) it is safe from class imbalance issue, unlike precision or recall (2) For unsupervised classifications where no labels are present, and we have to decide the label comparing with a threshold, AUC can denote the probability that, under any set threshold, a randomly selected positive edge node pair will have higher probability than a randomly selected negative edge node pair. If all node pairs are ranked randomly, AUC will be 0.5. Each evaluation were run three times with same seed for reproducibility, and the mean value was reported. Also, we have reported the accuracy and AUC loss compared to the Basic (Undefended) model. Moreover, as Attack-0  settings have no knowledge available for the attackers, it is considered as the most trivial form of attack. Hence, our tunings were done considering this trivial attack's performance, as the models are at least expected to defend Attack-0 , the most trivial one. 

\subsection{Baseline Models} For baseline evaluation, we have chosen two differential privacy (DP) based defense models, \textbf{EdgeRand} and \textbf{LapGraph}, from \cite{wu2022linkteller}. We have chosen three versions of them, with varying DP parameter $\epsilon$ (lower value indicates higher privacy, higher value indicates higher fidelity): \erf{}, \lapf{} and \lapd. It is to be noted that, as these models have trade-offs between fidelity and privacy, for the sake of comparison we will consider both cases. In each configuration of a particular GNN model and a dataset, the DP model marked with \cross{} is the DP model with highest prediction accuracy (denoted onwards as \dphfp: DP Defense with Highest Fidelity Preferred). Also, the DP model marked with \aster{} is the one with lowest Attack-0 AUC (denoted onwards as \dphpp: DP Defense with Highest Privacy Preferred). For example, in \Cref{tab:single} for Cora dataset with SAGE GNN model, \erf{} is the \dphfp{} and \lapd{} is the \dphpp.

\subsection{Tuned \mymodel{}}Considering fidelity-privacy tradeoff, we have evaluated two more versions of \mymodel{} whose $\mathcal{\lambda}_{align}$ are different from the original one. Whereas the original \mymodel{}'s hyperparameters were tuned for highest fidelity, \mymodeldef{}'s $\mathcal{\lambda}_{align}$ is adjusted to match the concerning \dphpp's Attack-0  AUC performance. Additionally, \mymodelpred{}'s $\mathcal{\lambda}_{align}$ is adjusted to match the concerning \dphfp's Prediction performance. They are also marked in the tables with \aster{} and \cross{}, respectively. The $\mathcal{\lambda}_{align}$ for each dataset and GNN combinations are described in \Cref{tab:align_tune}.

\begin{table}[!htp]
\fixtableonecol{6}
\caption{Tuned Values for $\mathcal{\lambda}_{align}$ for $\mymodel{}$-PredTuned and $\mymodel{}$-DefTuned}
   \label{tab:align_tune}
    \centering
    \begin{tabular}{|c|c|c|c|}
       \textbf{GNN} & \textbf{Dataset}  & \begin{tabular}[c]{@{}c@{}}$\mathcal{\lambda}_{align}$ tuned \\ for Highest \\Node Prediction\\($\mymodel{}$-PredTuned)\end{tabular} & \begin{tabular}[c]{@{}c@{}}$\mathcal{\lambda}_{align}$ tuned\\for Lowest\\Attack:0 AUC \\($\mymodel{}$-DefTuned)\end{tabular}  \\ \bigline
        
        SAGE & Cora & 0.3 & 0.25\\
      SAGE & Citeseer  & 0.001 & 0.005 \\
     SAGE & Pubmed  & 0.001 & 0.002\\ \bigline
     GCN & Cora & 2.0 & 1.0 \\
      GCN & Citeseer  & 0.5 & 0.001 \\
     GCN & Pubmed  & 0.15 & 0.1 \\ \bigline
     GAT & Cora & 0.15 & 0.25 \\
      GAT & Citeseer  & 0.002& 0.001\\
     GAT & Pubmed  & 0.001& 0.002\\ \bigline
     
    \end{tabular}
    
\end{table}

\subsection{Performance Criteria} We have considered the Node Prediction Performance (Accuracy) and the AUC of all the attacks (0-7) stated in \cite{he2021stealing} (further described in \Cref{tab:attack_config_intro}) and LinkTeller attack from \cite{wu2022linkteller}. We have shown Node prediction, Attack-0, 2, 3, 6; and LinkTeller performances together in \Cref{tab:single}, as there is no shadow dataset involved. For shadow dataset based attacks (Attack- 1,4,5,7), we have shown the results in \Cref{tab:transfer}. Moreover, as stated in \cite{he2021stealing}, Attack-0,2,3,6 had different feature design schemes. For comparison, we chose the best feature design for each attacks: Correlation distance, Posterior Distance, all features combined and all features combined (again), respectively.

\subsection{Performance Analysis: Node Prediction} For almost all dataset and GNN combinations, best node prediction accuracies were achieved by \mymodel{}. Besides, the best accuracy performances among other defenses in these combinations were achieved by the tuned-versions of \mymodel{}. In GCN:Citeseer combination, the highest accuracy was achieved by \mymodelpred{}. It is to be noted that only in GAT:Citeseer, the second best performance was achieved by a \dphfp.

\subsection{Performance Analysis: Singular Attacks (Attack= 0, 2, 3, 6)} For Attack-0, in all datasets, \mymodel{}'s tuned variants have achieved best performance for SAGE; but for GCN and GAT, it is the \dphpp{} who defended the best. In case of Attack-2, the finding is the same (\cite{he2021stealing} used the correlation difference between posteriors as the best feature for Attack-2 which is same as Attack-0, hence the same performance). Moreover, for Attack-3, none of the variants of \mymodel{} has been able to beat \dphpp. Finally, for Attack-6, \dphpp{} has beaten others in most combinations, while being the second best in cases where other DP based methods got the best result. At first glance, it might seem that \mymodel{} or its variants are not a good form of defense. But in hindsight, while considering the node prediction accuracy (fidelity), they pose as very balanced defense methods. For most of the cases where \mymodel{} and its variants came second or third-best, their defense performance were not that far, and the fidelities were better. 

\subsection{Performance Analysis: LinkTeller}
For LinkTeller (\cite{wu2022linkteller}), best performances were achieved by the DP defenses designed to beat it. \mymodel{} or its variants could not defend it better as LinkTeller focuses on the nodes' influence on each other more rather than exploiting their posterior proximities. As our model has not included inter-node influence-based losses in its design, it has not been able to show near-optimal performance for LinkTeller.

\subsection{Performance Analysis: Transfer Attacks (Attack- 1, 4, 5, 7)} For SAGE based GNN model, \mymodel{}'s tuned versions have shown superior performances for all dataset and shadow dataset combinations for Attack-1,4,5,7. Even in Pubmed $\rightarrow$ Citeseer combination for Attack-7, the performance of tuned \mymodel{}'s are close to the \dphpp{} one. But for the GCN-based GNN model, it is always the DP-based models that show the best defensive performances. For GAT-based models, the performance is almost similar, except for some optimal performances for Attack-7.

\begin{table}[]
\centering
\caption{Defence Performances on Node Prediction and all singular Attacks (Attack-0, 2, 3, 6; LinkTeller). DP-based Defense and \mymodel{} Models marked with $\dag$ and $\boldsymbol{*}$ are tuned, respectively, for high utility (Higher Node Prediction Accuracy) and high privacy (lower Attack-0 AUC).}\label{tab:single}
\fixtableonecol{6}
\begin{tabular}{cccccccccccccc}
            \bigline
\textbf{}    & \textbf{Dataset}                          & \multicolumn{12}{c}{Cora}                                                                                                                                                                                                                                                                                                                                                                                                                                                                                                                                \\\bigline

\textbf{GNN} & \textbf{Defense}                          & \multicolumn{2}{c}{\textbf{\begin{tabular}[c]{@{}c@{}}Node \\ Prediction\end{tabular}}}   & \multicolumn{2}{c}{\textbf{\begin{tabular}[c]{@{}c@{}}Attack-0:\\ Correlation\end{tabular}}} & \multicolumn{2}{c}{\textbf{\begin{tabular}[c]{@{}c@{}}Attack-2:\\ Posterior\end{tabular}}} & \multicolumn{2}{c}{\textbf{\begin{tabular}[c]{@{}c@{}}Attack-3:\\ All\end{tabular}}}  & \multicolumn{2}{c}{\textbf{\begin{tabular}[c]{@{}c@{}}Attack-6:\\ All\end{tabular}}}  & \multicolumn{2}{c}{\textbf{Linkteller}}                                              \\\bigline
             &                                           & \textbf{\rotatebox{90}{Acc$\uparrow$}} & \textbf{\rotatebox{90}{Loss$\downarrow$}} & \textbf{\rotatebox{90}{AUC$\downarrow$}}     & \textbf{\rotatebox{90}{Loss$\uparrow$}}    & \textbf{\rotatebox{90}{AUC$\downarrow$}}    & \textbf{\rotatebox{90}{Loss$\uparrow$}}   & \textbf{\rotatebox{90}{AUC$\downarrow$}} & \textbf{\rotatebox{90}{Loss$\uparrow$}} & \textbf{\rotatebox{90}{AUC$\downarrow$}} & \textbf{\rotatebox{90}{Loss$\uparrow$}} & \textbf{\rotatebox{90}{AUC$\downarrow$}} & \textbf{\rotatebox{90}{Loss$\uparrow$}} \\ \bigline
             & Basic                                     & 0.787                                        & -                                          & 0.92                                          & -                                           & 0.92                                         & -                                          & 0.93                                      & -                                        & 0.93                                      & -                                        & 0.98                                      & -                                        \\
& (\dag) EdgeRand$(\epsilon=6)$   & 0.564                                        & 0.223                                      & 0.81                                          & 0.112                                       & 0.81                                         & 0.112                                      & 0.83                                      & 0.096                                    & 0.84                                      & 0.087                                    & 0.5                                       & \textbf{0.476}                                    \\
             &($\boldsymbol{*}$) LapGraph$(\epsilon=0.1)$ & 0.417                                        & 0.37                                       & 0.7                                           & 0.226                                       & 0.7                                          & 0.226                                      & 0.72                                      & \textbf{0.207}                                    & 0.79                                      & \textbf{0.138}                                    & 0.5                                       & 0.474                                    \\
  SAGE           & LapGraph$(\epsilon=6)$   & 0.399                                        & 0.388                                      & 0.71                                          & 0.213                                       & 0.71                                         & 0.213                                      & 0.74                                      & 0.185                                    & 0.8                                       & 0.124                                    & 0.61                                      & 0.369                                    \\
             & NARGIS                                    & 0.81                                         & \textbf{-0.023           }                          & 0.86                                          & 0.067                                       & 0.86                                         & 0.067                                      & 0.86                                      & 0.069                                    & 0.88                                      & 0.052                                    & 0.97                                      & 0.01                                     \\
             & ($\boldsymbol{*}$)NARGIS-DefTuned                           & 0.513                                        & 0.274                                      & 0.65                                          & \textbf{0.272}                                       & 0.65                                         & \textbf{0.272}                                      & 0.76                                      & 0.168                                    & 0.81                                      & 0.113                                    & 0.96                                      & 0.013                                    \\
             & (\dag) NARGIS-PredTuned                          & 0.604                                        & \textit{0.183 }                                     & 0.69                                          & 0.23                                        & 0.69                                         & 0.23                                       & 0.79                                      & 0.14                                     & 0.82                                      & 0.102                                    & 0.96                                      & 0.013                                    \\ \dottedline{1}{14} \\            & Basic                                     & 0.793                                        & -                                          & 0.94                                          & -                                           & 0.94                                         & -                                          & 0.93                                      & -                                        & 0.93                                      & -                                        & 0.93                                      & -                                        \\
             &(\dag) EdgeRand$(\epsilon=6)$   & 0.64                                         & 0.153                                      & 0.88                                          & 0.057                                       & 0.88                                         & 0.057                                      & 0.89                                      & 0.044                                    & 0.86                                      & 0.071                                    & 0.5                                       & \textbf{0.43}                                     \\
             & ($\boldsymbol{*}$)LapGraph$(\epsilon=0.1)$ & 0.144                                        & 0.649                                      & 0.52                                          & \textbf{0.416}                                       & 0.52                                         & \textbf{0.416}                                      & 0.54                                      & \textbf{0.39}                                     & 0.75                                      & \textbf{0.179}                                    & 0.5                                       & 0.425                                    \\
GCN            & LapGraph$(\epsilon=6)$   & 0.302                                        & 0.491                                      & 0.67                                          & 0.272                                       & 0.67                                         & 0.272                                      & 0.69                                      & 0.245                                    & 0.79                                      & 0.137                                    & 0.59                                      & 0.336                                    \\
             & NARGIS                                    & 0.852                                        & \textbf{-0.059}                                     & 0.9                                           & 0.039                                       & 0.9                                          & 0.039                                      & 0.89                                      & 0.037                                    & 0.9                                       & 0.03                                     & 0.82                                      & 0.109                                    \\
             & ($\boldsymbol{*}$) NARGIS-DefTuned                           & 0.763                                        & \textit{0.03}                                       & 0.89                                          & 0.047                                       & 0.89                                         & 0.047                                      & 0.88                                      & 0.05                                     & 0.89                                      & 0.034                                    & 0.79                                      & 0.139                                    \\
             & (\dag) NARGIS-PredTuned                          & 0.746                                        & 0.047                                      & 0.89                                          & 0.05                                        & 0.89                                         & 0.05                                       & 0.89                                      & 0.044                                    & 0.9                                       & 0.033                                    & 0.78                                      & 0.146                                    \\\dottedline{1}{14} \\ 
             & Basic                                     & 0.748                                        & -                                          & 0.92                                          & -                                           & 0.92                                         & -                                          & 0.91                                      & -                                        & 0.92                                      & -                                        & 0.8                                       & -                                        \\
             & (\dag) EdgeRand$(\epsilon=6)$   & 0.399                                        & 0.349                                      & 0.73                                          & 0.188                                       & 0.73                                         & 0.188                                      & 0.74                                      & 0.174                                    & 0.8                                       & 0.111                                    & 0.5                                       & \textbf{0.296}                                    \\
            & ($\boldsymbol{*}$) LapGraph$(\epsilon=0.1)$ & 0.125                                        & 0.623                                      & 0.52                                          & \textbf{0.4}                                         & 0.52                                         & 0.4                                        & 0.54                                      & \textbf{0.368}                                    & 0.75                                      & \textbf{0.165}                                    & 0.51                                      & 0.289                                    \\
GAT             & LapGraph$(\epsilon=6)$   & 0.217                                        & 0.531                                      & 0.61                                          & 0.307                                       & 0.61                                         & 0.307                                      & 0.62                                      & 0.288                                    & 0.77                                      & 0.146                                    & 0.59                                      & 0.209                                    \\
             & NARGIS                                    & 0.71                                         & \textbf{0.038}                                      & 0.81                                          & 0.105                                       & 0.81                                         & 0.105                                      & 0.83                                      & 0.078                                    & 0.87                                      & 0.044                                    & 0.71                                      & 0.089                                    \\
             & ($\boldsymbol{*}$) NARGIS-DefTuned                           & 0.329                                        & \textit{0.419}                                      & 0.65                                          & 0.264                                       & 0.65                                         & 0.264                                      & 0.74                                      & 0.17                                     & 0.84                                      & 0.081                                    & 0.61                                      & 0.183                                    \\
             & (\dag)NARGIS-PredTuned                          & 0.289                                        & 0.459                                      & 0.63                                          & 0.286                                       & 0.63                                         & 0.286                                      & 0.72                                      & 0.186                                    & 0.83                                      & 0.086                                    & 0.66                                      & 0.141                                    \\
             &                                           &                                              &                                            &                                               &                                             &                                              &                                            &                                           &                                          &                                           &                                          &                                           &                                          \\
                                                 \bigline
             & \textbf{Dataset}                          & \multicolumn{12}{c}{Citeseer} \\ \bigline                                                                                                                                                                                                                                                                                                                                                                                                                                                                                                                         
             
             & Basic                                     & 0.687                                        & -                                          & 0.93                                          & -                                           & 0.93                                         & -                                          & 0.94                                      & -                                        & 0.94                                      & -                                        & 0.99                                      & -                                        \\
             & EdgeRand$(\epsilon=6)$   & 0.5                                          & 0.187                                      & 0.8                                           & 0.123                                       & 0.8                                          & 0.123                                      & 0.82                                      & 0.123                                    & 0.86                                      & 0.084                                    & 0.5                                       & \textbf{0.489}                                    \\
             & LapGraph$(\epsilon=0.1)$ & 0.547                                        & 0.14                                       & 0.76                                          & 0.171                                       & 0.76                                         & 0.171                                      & 0.77                                      & 0.174                                    & 0.82                                      & \textbf{0.122 }                                   & 0.51                                      & 0.482                                    \\
SAGE             & ($\boldsymbol{*}$)(\dag )LapGraph$(\epsilon=6)$   & 0.553                                        & 0.134                                      & 0.75                                          & 0.182                                       & 0.75                                         & 0.182                                      & 0.76                                      & \textbf{0.181}                                    & 0.82                                      & 0.12                                     & 0.57                                      & 0.418                                    \\
             & NARGIS                                    & 0.584                                        & \textbf{0.103}                                      & 0.75                                          & 0.178                                       & 0.75                                         & 0.178                                      & 0.83                                      & 0.111                                    & 0.86                                      & 0.081                                    & 0.97                                      & 0.015                                    \\
             & ($\boldsymbol{*}$) NARGIS-DefTuned                           & 0.495                                        & 0.192                                      & 0.66                                          & \textbf{0.266}                                       & 0.66                                         & \textbf{0.266}                                      & 0.81                                      & 0.132                                    & 0.84                                      & 0.102                                    & 0.97                                      & 0.015                                    \\
             & (\dag) NARGIS-PredTuned                          & 0.56                                         & \textit{0.127}                                      & 0.77                                          & 0.159                                       & 0.77                                         & 0.159                                      & 0.88                                      & 0.058                                    & 0.89                                      & 0.054                                    & 0.94                                      & 0.054                                    \\\dottedline{1}{14} \\ 
             & Basic                                     & 0.679                                        & -                                          & 0.95                                          & -                                           & 0.95                                         & -                                          & 0.94                                      & -                                        & 0.95                                      & -                                        & 0.95                                      & -                                        \\
             & (\dag)EdgeRand$(\epsilon=6)$   & 0.46                                         & 0.219                                      & 0.85                                          & 0.102                                       & 0.85                                         & 0.102                                      & 0.85                                      & 0.09                                     & 0.86                                      & 0.092                                    & 0.5                                       & \textbf{0.446}                                    \\
             & ($\boldsymbol{*}$)LapGraph$(\epsilon=0.1)$ & 0.194                                        & 0.485                                      & 0.53                                          & \textbf{0.423}                                       & 0.53                                         & \textbf{0.423}                                      & 0.55                                      & \textbf{0.396}                                    & 0.8                                       & \textbf{0.148}                                   & 0.52                                      & 0.431                                    \\
GCN             & LapGraph$(\epsilon=6)$   & 0.262                                        & 0.417                                      & 0.64                                          & 0.31                                        & 0.64                                         & 0.31                                       & 0.65                                      & 0.29                                     & 0.81                                      & 0.136                                    & 0.57                                      & 0.373                                    \\
             & NARGIS                                    & 0.463                                        & 0.216                                      & 0.82                                          & 0.127                                       & 0.82                                         & 0.127                                      & 0.86                                      & 0.088                                    & 0.9                                       & 0.047                                    & 0.86                                      & 0.089                                    \\
             & ($\boldsymbol{*}$)NARGIS-DefTuned                           & 0.185                                        & 0.494                                      & 0.71                                          & 0.24                                        & 0.71                                         & 0.24                                       & 0.77                                      & 0.178                                    & 0.84                                      & 0.109                                    & 0.84                                      & 0.103                                    \\
             & (\dag) NARGIS-PredTuned                          & 0.577                                        & \textbf{\textit{0.102}}                                      & 0.9                                           & 0.054                                       & 0.9                                          & 0.054                                      & 0.9                                       & 0.043                                    & 0.92                                      & 0.03                                     & 0.84                                      & 0.11                                     \\ \dottedline{1}{14} \\ 
             & Basic                                     & 0.677                                        & -                                          & 0.93                                          & -                                           & 0.93                                         & -                                          & 0.92                                      & -                                        & 0.93                                      & -                                        & 0.87                                      & -                                        \\
             & (\dag) EdgeRand$(\epsilon=6)$   & 0.3                                          & \textit{0.377}                                      & 0.68                                          & 0.249                                       & 0.68                                         & 0.249                                      & 0.7                                       & 0.222                                    & 0.83                                      & 0.1                                      & 0.5                                       & 0.366                                    \\
             & ($\boldsymbol{*}$)LapGraph$(\epsilon=0.1)$ & 0.225                                        & 0.452                                      & 0.55                                          & \textbf{0.387}                                       & 0.55                                         & \textbf{0.387}                                      & 0.56                                      & \textbf{0.36}                                     & 0.8                                       & \textbf{0.132}                                    & 0.5                                       & \textbf{0.363}                                    \\
GAT             & LapGraph$(\epsilon=6)$   & 0.286                                        & 0.391                                      & 0.65                                          & 0.286                                       & 0.65                                         & 0.286                                      & 0.65                                      & 0.27                                     & 0.81                                      & 0.118                                    & 0.58                                      & 0.286                                    \\
             & NARGIS                                    & 0.626                                        & \textbf{0.051}                                      & 0.89                                          & 0.045                                       & 0.89                                         & 0.045                                      & 0.9                                       & 0.025                                    & 0.91                                      & 0.017                                    & 0.77                                      & 0.097                                    \\
             & ($\boldsymbol{*}$)NARGIS-DefTuned                           & 0.238                                        & 0.439                                      & 0.68                                          & 0.253                                       & 0.68                                         & 0.253                                      & 0.8                                       & 0.124                                    & 0.88                                      & 0.048                                    & 0.66                                      & 0.208                                    \\
             & (\dag) NARGIS-PredTuned                          & 0.295                                        & 0.382                                      & 0.74                                          & 0.189                                       & 0.74                                         & 0.189                                      & 0.83                                      & 0.091                                    & 0.9                                       & 0.03                                     & 0.61                                      & 0.254                                    \\
             &                                           &                                              &                                            &                                               &                                             &                                              &                                            &                                           &                                          &                                           &                                          &                                           &                                          \\
             \bigline
             & \textbf{Dataset}                          & \multicolumn{12}{c}{Pubmed} \\    \bigline                                                                                                                                                                                                                                                                                                                                                                                                                                                                                                                          
                                                     
             & Basic                                     & 0.767                                        & -                                          & 0.86                                          & -                                           & 0.86                                         & -                                          & 0.9                                       & -                                        & 0.9                                       & -                                        & 0.99                                      & -                                        \\
             & (\dag) EdgeRand$(\epsilon=6)$   & 0.705                                        & 0.062                                      & 0.76                                          & 0.107                                       & 0.76                                         & 0.107                                      & 0.77                                      & 0.13                                     & 0.81                                      & 0.092                                    & 0.5                                       & \textbf{0.49}                                     \\
             & LapGraph$(\epsilon=0.1)$ & 0.7                                          & 0.067                                      & 0.75                                          & 0.11                                        & 0.75                                         & 0.11                                       & 0.77                                      & 0.132                                    & 0.81                                      & 0.092                                    & 0.5                                       & 0.488                                    \\
 SAGE            & ($\boldsymbol{*}$)LapGraph$(\epsilon=6)$   & 0.7                                          & 0.067                                      & 0.75                                          & 0.113                                       & 0.75                                         & 0.113                                      & 0.77                                      & \textbf{0.135}                                    & 0.8                                       & \textbf{0.094}                                    & 0.5                                       & 0.489                                    \\
             & NARGIS                                    & 0.734                                        & \textbf{0.033}                                      & 0.82                                          & 0.039                                       & 0.82                                         & 0.039                                      & 0.87                                      & 0.033                                    & 0.87                                      & 0.031                                    & 0.98                                      & 0.011                                    \\
             & ($\boldsymbol{*}$)NARGIS-DefTuned                           & 0.718	&\textit{0.049}	&0.71	&0.148	&0.71	&0.148	&0.8	&0.105	&0.83	&0.066	&0.97	&0.017                                     \\
             & (\dag) NARGIS-PredTuned                          & 0.578                                        & 0.189                                      & 0.71                                          & \textbf{0.151}                                       & 0.71                                         & \textbf{0.151}                                      & 0.79                                      & 0.111                                    & 0.83                                      & 0.064                                    & 0.96                                      & 0.032                                    \\\dottedline{1}{14} \\ 
             & Basic                                     & 0.785                                        & -                                          & 0.87                                          & -                                           & 0.87                                         & -                                          & 0.89                                      & -                                        & 0.86                                      & -                                        & 0.93                                      & -                                        \\
             & EdgeRand$(\epsilon=6)$   & 0.407                                        & 0.378                                      & 0.61                                          & 0.26                                        & 0.61                                         & 0.26                                       & 0.68                                      & 0.206                                    & 0.77                                      & \textbf{0.091}                                    & 0.5                                       & \textbf{0.432}                                    \\
             & ($\boldsymbol{*}$)LapGraph$(\epsilon=0.1)$ & 0.429                                        & 0.356                                      & 0.51                                          & \textbf{0.36}                                        & 0.51                                         &\textbf{0.36}                                      & 0.52                                      & \textbf{0.372}                                    & 0.78                                      & 0.087                                    & 0.5                                       & \textbf{0.432}                                    \\
GCN             & (\dag) LapGraph$(\epsilon=6)$   & 0.458                                        & 0.327                                      & 0.56                                          & 0.311                                       & 0.56                                         & 0.311                                      & 0.58                                      & 0.312                                    & 0.78                                      & 0.084                                    & 0.53                                      & 0.399                                    \\
             & NARGIS                                    & 0.672                                        & \textbf{0.113}                                      & 0.8                                           & 0.068                                       & 0.8                                          & 0.068                                      & 0.85                                      & 0.039                                    & 0.86                                      & 0.007                                    & 0.85                                      & 0.081                                    \\
             & ($\boldsymbol{*}$)NARGIS-DefTuned                           & 0.545                                        & 0.24                                       & 0.57                                          & 0.296                                       & 0.57                                         & 0.296                                      & 0.84                                      & 0.051                                    & 0.88                                      & -0.011                                   & 0.82                                      & 0.107                                    \\
             & (\dag) NARGIS-PredTuned                          & 0.58                                         & \textit{0.205}                                      & 0.55                                          & 0.317                                       & 0.55                                         & 0.317                                      & 0.83                                      & 0.061                                    & 0.88                                      & -0.015                                   & 0.8                                       & 0.129                                    \\\dottedline{1}{14} \\ 
             & Basic                                     & 0.769                                        & -                                          & 0.85                                          & -                                           & 0.85                                         & -                                          & 0.89                                      & -                                        & 0.89                                      & -                                        & 0.9                                       & -                                        \\
             & ($\boldsymbol{*}$)EdgeRand$(\epsilon=6)$   & 0.413                                        & 0.356                                      & 0.51                                          & \textbf{0.347}                                       & 0.51                                         & \textbf{0.347}                                      & 0.5                                       & \textbf{0.389}                                    & 0.78                                      & \textbf{0.107}                                    & 0.5                                       & \textbf{0.398}                                    \\
             & LapGraph$(\epsilon=0.1)$ & 0.408                                        & 0.361                                      & 0.52                                          & 0.335                                       & 0.52                                         & 0.335                                      & 0.53                                      & 0.36                                     & 0.78                                      & 0.11                                     & 0.5                                       & 0.397                                    \\
GAT             & (\dag) LapGraph$(\epsilon=6)$   & 0.452                                        & 0.317                                      & 0.57                                          & 0.283                                       & 0.57                                         & 0.283                                      & 0.58                                      & 0.308                                    & 0.79                                      & 0.103                                    & 0.53                                      & 0.366                                    \\
             & NARGIS                                    & 0.657                                        & \textbf{0.112}                                      & 0.8                                           & 0.052                                       & 0.8                                          & 0.052                                      & 0.86                                      & 0.032                                    & 0.86                                      & 0.026                                    & 0.82                                      & 0.08                                     \\
             & ($\boldsymbol{*}$) NARGIS-DefTuned                           & 0.65                                         & \textit{0.119}                                      & 0.68                                          & 0.17                                        & 0.68                                         & 0.17                                       & 0.74                                      & 0.151                                    & 0.83                                      & 0.06                                     & 0.72                                      & 0.174                                    \\
             & (\dag) NARGIS-PredTuned                          & 0.641                                        & 0.128                                      & 0.68                                          & 0.178                                       & 0.68                                         & 0.178                                      & 0.74                                      & 0.15                                     & 0.84                                      & 0.054                                    & 0.71                                      & 0.19              \\\bigline                      
\end{tabular}
\end{table}
\begin{table*}[]
\centering
\caption{Defence Performances on all transfer Attacks (Attack-1, 4, 5, 7). DP-based Defense and \mymodel{} Models marked with $\dag$ and $\boldsymbol{*}$ are tuned, respectively, for high utility (Higher Node Prediction Accuracy) and high privacy (lower Attack-0 AUC).}\label{tab:transfer}
\fixtableonecol{6}

\end{table*}
\section{Discussion}\label{sec:disc}
In this work, we have introduced a Graph Modification attack as a defensive measure against Link Stealing Attack, In the bigger picture, we investigated whether a form of attack (i.e., Modification) can be introduced as a defensive measure for another form of attack (i.e., inference). Unlike the previous DP-based approaches \cite{wu2022linkteller}, where the privacy of the graph was ensured through noise addition in the adjacency matrix, i.e., modifying the edge set, our work modified (addition) in the node-set through a clustering and learning based approach. The approach was focused on keeping up the fidelity even with higher privacy, unlike DP-based approaches where there are trade-offs. From the performance analysis on different combinations of GNN models, datasets, attack methods, DP-based Defenses, \mymodel{} and its variants' performances, we could figure out some key findings:
\begin{itemize}
    \item \mymodel{} and its variants have optimal performances mostly for SAGE, and never for GCN.
    \item \mymodel{} and its variants, despite being the second best in terms of defending in most cases, are actually the most balanced form of fidelity-privacy trade-off, as they show better accuracy performances for near-equal defensive performances.
    \item \mymodel{} can preserve model fidelity better than DP-based defenses for almost similar range of defense performance.
\end{itemize}

The limitations are: 
\begin{itemize}
    \item Not having optimal performances ubiquitous for all GNN models
    \item Approaches are justified through the lens of experiments and heuristics rather than theoretical proof
    \item Unlike for similarity-based attacks, the performances on mutual influence-based LinkTeller Attack are not adequately defended, as the model considers the posterior similarity measures only- not the mutual influence between node features. 
\end{itemize}

Based on the findings, we propose some investigation scenarios and hypotheses:\\
\noindent \textbf{Formalizing the Influence of $\mathcal{\lambda}$ Weights in Defense Loss.} While tuning the model for best performances, the main adjustment was done for the loss weights $\mathcal{\lambda}_{misc}$, $ \mathcal{\lambda}_{align}$, $\mathcal{\lambda}_{dist}$ and $ \mathcal{\lambda}_{corr}$. Their corresponding loss gradients for the output layer have closed-form which could open the door for the theoretical analysis of what should be the tuning mechanism of these loss weights to achieve optimum performance. As the corresponding losses involve contrasting objectives, proper tuning scheme can help achieve the desired fidelity-privacy tradeoff.

\noindent \textbf{$\mathcal{\lambda}_{align}$ for Tuning} Our heuristic for using alignment loss weight as the tuning parameter for comparison against \dphpp and \dphfp was that this loss tries to keep the model fidelity performance up when the other losses are focusing on perturbing the posterior spaces. Jointly tuning one of the other contrasting loss weights along with $\mathcal{\lambda}_{align}$ could help as a trade-off tuning scheme. 

\noindent \textbf{Integrating Node Influence.} \mymodel{} is focused more on perturbing the model output space through the augmented nodes as the center of the pre-calculated spectral clusters, rather than through the most influential nodes. Apart from the graph topology which was used in computing the clusters, we can also integrate the nodes' influence over others while forming the clusters.

\noindent \textbf{Message-Passing Mechanism.} Our model is message-passing agnostic, as the main defensive layer is established on the graph input side, rather than in the graph message-passing mechanism, unlike \cite{zhang2020gnnguard}. As we saw SAGE's superior performance over GCN or GAT, it can be a future direction to investigate how to integrate the augmentation scheme inside the message-passing mechanism.
\section{Related Works}
In this section, we will introduce the contemporary works on attacks on graphs and GNNs, and the defensive measures against the attacks. For an in-depth introduction to the message-passing GNNs, we refer to the works \cite{kipf2016semi, kipf2016variational, veličković2018graph, hamilton2017inductive, xu2018powerful} to the readers.



\subsection{Attacks on Graph and GNN Models} Literature have discussed attacks on graph in different settings: poison (manipulating the model through corrupted data), evasion (manipulating the data to bypass the model without corrupting), reconstruction (inferring the data through model posteriors and other information), inversion (inferring the model parameters) etc. Under the evasion settings, Zou et al.~\cite{zou2021tdgia} has proposed a topological edge selection-based strategy for inserting fake nodes inside a graph, where the features are learned using a surrogate model. Chen et al. \cite{chen2022understanding} have shown that by improving the homophily unnoticeability of the graphs, Graph Injection attacks, another example of evasion setting, can outperform previous homophily-based attacks. Different attacks have been proposed for inferring the graph structures and memberships. He et al.~\cite{he2021stealing} introduced Link stealing attack from the posteriors of GNNs on the basis of proximity-based features. In \cite{wu2022linkteller}, Wu et al., have evolved from the notion of similarity between node posteriors and leverage node feature perturbation-based influence analysis for predicting graph edges. Zhang et al. ~\cite{zhang2023demystifying} has shown that the privacy risks for edge groups from the membership inference attack are uneven and performed group-based attacks depending on the unequal vulnerability. Different works have also discussed poisoning the graph for reducing the node classification performance through Meta-Learning (Meta-attack \cite{zugner2020adversarial}), Reinforcement Learning (NIPA \cite{sun2020adversarial}), Fast Gradient Sign
Model linearization (AFGSM \cite{wang2020scalable}) etc. It should be noted that our focus attack is in reconstruction settings. In contrast, our defense is influenced by poisoning settings, as it manipulates the graph structure during training. Defending the model and data from an attack under a particular setting through adapting another attack settings is the main focus of our work. 

\subsection{Defense against Attack on Graph Models} Several defense mechanisms have been proposed and validated for attack on Graphs and GNNs. The defensive strategies for these works are mostly focused on robust perturbations of Graph embeddings and Differential Privacy guaranteed learning of Graphs. Zhang and Zitnik has introduced
GNNGuard~\cite{zhang2020gnnguard}- based on neighbor importance estimation and layer-wise graph memory in order to prune possible fake and suspicious edges to stabilize the original graph against the training-time perturbation attack. Differential privacy-based approaches have been a popular method for defending against graph reconstruction (specially link stealing) attacks. Kolluri et al. ~\cite{kolluri2022lpgnet} developed a novel architecture named LPGNet using only multilayer perceptrons to model both the node feature information and some carefully chosen graph structural information from the graph edges after compressing the edge information as a feature vector. In this way, edges are kept separate from the attacks. Another approach called Blink (\cite{zhu2023blink}) injects noise into each node’s adjacency list and degree in a decentralized setting and guarantees Localized Differential Privacy; and then uses Bayesian estimation in the server to receive original edge information. Zhou et al. have introduced the Markov chain-based graph reconstruction attack and defense under information theory-guided assumptions \cite{zhou2023strengthening}. Wu et al., apart from LinkTeller, also introduced in \cite{wu2022linkteller} two DP-based defenses named EdgeRand and LapGraph against link inference attacks, as discussed in previous sections. Our work differs from these DP-based reconstruction-defending approaches in different aspects. Firstly, unlike the DP-based approaches, our model does not change the original edges and node features through noise injection. Hence, the originality of the concerned data is preserved. Secondly, we leverage a deterministic approach- spectral clustering, for injecting the nodes in focused places- instead of the randomized approach prioritized by the DP-based models. This can pave the way for more stable and consistent performance analysis and guarantees.



\section{Conclusion and Future Work}

In this paper, we introduce a defense against link-stealing attacks inspired by the paradigm of using attacks as a form of defense. We propose \model{} - a node augmentation-based defense for GNN, which can restrict the model from providing posteriors leading to edge inference attackers, by perturbing the embedding space through augmenting nodes with learnable features instead of fixed ones. We show that our model can achieve good fidelity-privacy tradeoff in some cases, while there are other scopes to improve on.

\noindent \textbf{Future work.} In the future we want to theoretically find a performance bound - concerning the tunable defense loss weights in the model optimization loop. We also would like to investigate how to make the model performance better across different message-passing mechanisms. 
\bibliographystyle{ACM-Reference-Format}
\bibliography{main}


\begin{thebibliography}{32}


\ifx \showCODEN    \undefined \def \showCODEN     #1{\unskip}     \fi
\ifx \showDOI      \undefined \def \showDOI       #1{#1}\fi
\ifx \showISBNx    \undefined \def \showISBNx     #1{\unskip}     \fi
\ifx \showISBNxiii \undefined \def \showISBNxiii  #1{\unskip}     \fi
\ifx \showISSN     \undefined \def \showISSN      #1{\unskip}     \fi
\ifx \showLCCN     \undefined \def \showLCCN      #1{\unskip}     \fi
\ifx \shownote     \undefined \def \shownote      #1{#1}          \fi
\ifx \showarticletitle \undefined \def \showarticletitle #1{#1}   \fi
\ifx \showURL      \undefined \def \showURL       {\relax}        \fi
\providecommand\bibfield[2]{#2}
\providecommand\bibinfo[2]{#2}
\providecommand\natexlab[1]{#1}
\providecommand\showeprint[2][]{arXiv:#2}

\bibitem[Chen et~al\mbox{.}(2022)]%
        {chen2022understanding}
\bibfield{author}{\bibinfo{person}{Yongqiang Chen}, \bibinfo{person}{Han Yang}, \bibinfo{person}{Yonggang Zhang}, \bibinfo{person}{Kaili Ma}, \bibinfo{person}{Tongliang Liu}, \bibinfo{person}{Bo Han}, {and} \bibinfo{person}{James Cheng}.} \bibinfo{year}{2022}\natexlab{}.
\newblock \showarticletitle{Understanding and improving graph injection attack by promoting unnoticeability}.
\newblock \bibinfo{journal}{\emph{arXiv preprint arXiv:2202.08057}} (\bibinfo{year}{2022}).
\newblock


\bibitem[Fey and Lenssen(2019)]%
        {fey2019fast}
\bibfield{author}{\bibinfo{person}{Matthias Fey} {and} \bibinfo{person}{Jan~Eric Lenssen}.} \bibinfo{year}{2019}\natexlab{}.
\newblock \showarticletitle{Fast graph representation learning with PyTorch Geometric}.
\newblock \bibinfo{journal}{\emph{arXiv preprint arXiv:1903.02428}} (\bibinfo{year}{2019}).
\newblock


\bibitem[Hagberg et~al\mbox{.}(2008)]%
        {hagberg2008exploring}
\bibfield{author}{\bibinfo{person}{Aric Hagberg}, \bibinfo{person}{Pieter~J Swart}, {and} \bibinfo{person}{Daniel~A Schult}.} \bibinfo{year}{2008}\natexlab{}.
\newblock \bibinfo{booktitle}{\emph{Exploring network structure, dynamics, and function using NetworkX}}.
\newblock \bibinfo{type}{{T}echnical {R}eport}. \bibinfo{institution}{Los Alamos National Laboratory (LANL), Los Alamos, NM (United States)}.
\newblock


\bibitem[Hamilton et~al\mbox{.}(2017)]%
        {hamilton2017inductive}
\bibfield{author}{\bibinfo{person}{Will Hamilton}, \bibinfo{person}{Zhitao Ying}, {and} \bibinfo{person}{Jure Leskovec}.} \bibinfo{year}{2017}\natexlab{}.
\newblock \showarticletitle{Inductive representation learning on large graphs}.
\newblock \bibinfo{journal}{\emph{Advances in neural information processing systems}}  \bibinfo{volume}{30} (\bibinfo{year}{2017}).
\newblock


\bibitem[He et~al\mbox{.}(2021)]%
        {he2021stealing}
\bibfield{author}{\bibinfo{person}{Xinlei He}, \bibinfo{person}{Jinyuan Jia}, \bibinfo{person}{Michael Backes}, \bibinfo{person}{Neil~Zhenqiang Gong}, {and} \bibinfo{person}{Yang Zhang}.} \bibinfo{year}{2021}\natexlab{}.
\newblock \showarticletitle{Stealing links from graph neural networks}. In \bibinfo{booktitle}{\emph{30th USENIX Security Symposium (USENIX Security 21)}}. \bibinfo{pages}{2669--2686}.
\newblock


\bibitem[Kingma and Ba(2014)]%
        {adam}
\bibfield{author}{\bibinfo{person}{Diederik~P Kingma} {and} \bibinfo{person}{Jimmy Ba}.} \bibinfo{year}{2014}\natexlab{}.
\newblock \showarticletitle{Adam: A method for stochastic optimization}.
\newblock \bibinfo{journal}{\emph{arXiv preprint arXiv:1412.6980}} (\bibinfo{year}{2014}).
\newblock


\bibitem[Kipf and Welling(2016a)]%
        {kipf2016semi}
\bibfield{author}{\bibinfo{person}{Thomas~N Kipf} {and} \bibinfo{person}{Max Welling}.} \bibinfo{year}{2016}\natexlab{a}.
\newblock \showarticletitle{Semi-Supervised Classification with Graph Convolutional Networks}. In \bibinfo{booktitle}{\emph{International Conference on Learning Representations}}.
\newblock


\bibitem[Kipf and Welling(2016b)]%
        {kipf2016variational}
\bibfield{author}{\bibinfo{person}{Thomas~N Kipf} {and} \bibinfo{person}{Max Welling}.} \bibinfo{year}{2016}\natexlab{b}.
\newblock \showarticletitle{Variational graph auto-encoders}.
\newblock \bibinfo{journal}{\emph{arXiv preprint arXiv:1611.07308}} (\bibinfo{year}{2016}).
\newblock


\bibitem[Kolluri et~al\mbox{.}(2022)]%
        {kolluri2022lpgnet}
\bibfield{author}{\bibinfo{person}{Aashish Kolluri}, \bibinfo{person}{Teodora Baluta}, \bibinfo{person}{Bryan Hooi}, {and} \bibinfo{person}{Prateek Saxena}.} \bibinfo{year}{2022}\natexlab{}.
\newblock \showarticletitle{LPGNet: Link private graph networks for node classification}. In \bibinfo{booktitle}{\emph{Proceedings of the 2022 ACM SIGSAC Conference on Computer and Communications Security}}. \bibinfo{pages}{1813--1827}.
\newblock


\bibitem[Liu et~al\mbox{.}(2021)]%
        {liu2021item}
\bibfield{author}{\bibinfo{person}{Weiwen Liu}, \bibinfo{person}{Yin Zhang}, \bibinfo{person}{Jianling Wang}, \bibinfo{person}{Yun He}, \bibinfo{person}{James Caverlee}, \bibinfo{person}{Patrick~PK Chan}, \bibinfo{person}{Daniel~S Yeung}, {and} \bibinfo{person}{Pheng-Ann Heng}.} \bibinfo{year}{2021}\natexlab{}.
\newblock \showarticletitle{Item relationship graph neural networks for e-commerce}.
\newblock \bibinfo{journal}{\emph{IEEE Transactions on Neural Networks and Learning Systems}} \bibinfo{volume}{33}, \bibinfo{number}{9} (\bibinfo{year}{2021}), \bibinfo{pages}{4785--4799}.
\newblock


\bibitem[Meta(2022)]%
        {facebookgraph}
\bibfield{author}{\bibinfo{person}{Inc Meta}.} \bibinfo{year}{2022}\natexlab{}.
\newblock \bibinfo{title}{Graph API - Documentation - Meta for Developers}.
\newblock
\newblock
\newblock
\shownote{\url{https://developers.facebook.com/docs/graph-api/overview/} [Accessed: (May 31, 2024)]}.


\bibitem[{Neo4j, Inc.}(2023)]%
        {neo4j_aura}
\bibfield{author}{\bibinfo{person}{{Neo4j, Inc.}}} \bibinfo{year}{2023}\natexlab{}.
\newblock \bibinfo{title}{Neo4j Aura: The Managed, Cloud Graph Database Service}.
\newblock
\newblock
\urldef\tempurl%
\url{https://neo4j.com/cloud/aura/}
\showURL{%
\tempurl}
\newblock
\shownote{Accessed: 2024-10-13}.


\bibitem[Paszke et~al\mbox{.}(2019)]%
        {paszke2019pytorch}
\bibfield{author}{\bibinfo{person}{Adam Paszke}, \bibinfo{person}{Sam Gross}, \bibinfo{person}{Francisco Massa}, \bibinfo{person}{Adam Lerer}, \bibinfo{person}{James Bradbury}, \bibinfo{person}{Gregory Chanan}, \bibinfo{person}{Trevor Killeen}, \bibinfo{person}{Zeming Lin}, \bibinfo{person}{Natalia Gimelshein}, \bibinfo{person}{Luca Antiga}, {et~al\mbox{.}}} \bibinfo{year}{2019}\natexlab{}.
\newblock \showarticletitle{Pytorch: An imperative style, high-performance deep learning library}.
\newblock \bibinfo{journal}{\emph{Advances in neural information processing systems}}  \bibinfo{volume}{32} (\bibinfo{year}{2019}).
\newblock


\bibitem[Sankar et~al\mbox{.}(2021)]%
        {sankar2021graph}
\bibfield{author}{\bibinfo{person}{Aravind Sankar}, \bibinfo{person}{Yozen Liu}, \bibinfo{person}{Jun Yu}, {and} \bibinfo{person}{Neil Shah}.} \bibinfo{year}{2021}\natexlab{}.
\newblock \showarticletitle{Graph neural networks for friend ranking in large-scale social platforms}. In \bibinfo{booktitle}{\emph{Proceedings of the Web Conference 2021}}. \bibinfo{pages}{2535--2546}.
\newblock


\bibitem[Sun et~al\mbox{.}(2020)]%
        {sun2020adversarial}
\bibfield{author}{\bibinfo{person}{Yiwei Sun}, \bibinfo{person}{Suhang Wang}, \bibinfo{person}{Xianfeng Tang}, \bibinfo{person}{Tsung-Yu Hsieh}, {and} \bibinfo{person}{Vasant Honavar}.} \bibinfo{year}{2020}\natexlab{}.
\newblock \showarticletitle{Adversarial attacks on graph neural networks via node injections: A hierarchical reinforcement learning approach}. In \bibinfo{booktitle}{\emph{Proceedings of the Web Conference 2020}}. \bibinfo{pages}{673--683}.
\newblock


\bibitem[Veličković et~al\mbox{.}(2018)]%
        {veličković2018graph}
\bibfield{author}{\bibinfo{person}{Petar Veličković}, \bibinfo{person}{Guillem Cucurull}, \bibinfo{person}{Arantxa Casanova}, \bibinfo{person}{Adriana Romero}, \bibinfo{person}{Pietro Liò}, {and} \bibinfo{person}{Yoshua Bengio}.} \bibinfo{year}{2018}\natexlab{}.
\newblock \showarticletitle{Graph Attention Networks}. In \bibinfo{booktitle}{\emph{International Conference on Learning Representations}}.
\newblock
\urldef\tempurl%
\url{https://openreview.net/forum?id=rJXMpikCZ}
\showURL{%
\tempurl}


\bibitem[Von~Luxburg(2007)]%
        {von2007tutorial}
\bibfield{author}{\bibinfo{person}{Ulrike Von~Luxburg}.} \bibinfo{year}{2007}\natexlab{}.
\newblock \showarticletitle{A tutorial on spectral clustering}.
\newblock \bibinfo{journal}{\emph{Statistics and computing}}  \bibinfo{volume}{17} (\bibinfo{year}{2007}), \bibinfo{pages}{395--416}.
\newblock


\bibitem[Wang et~al\mbox{.}(2020)]%
        {wang2020scalable}
\bibfield{author}{\bibinfo{person}{Jihong Wang}, \bibinfo{person}{Minnan Luo}, \bibinfo{person}{Fnu Suya}, \bibinfo{person}{Jundong Li}, \bibinfo{person}{Zijiang Yang}, {and} \bibinfo{person}{Qinghua Zheng}.} \bibinfo{year}{2020}\natexlab{}.
\newblock \showarticletitle{Scalable attack on graph data by injecting vicious nodes}.
\newblock \bibinfo{journal}{\emph{Data Mining and Knowledge Discovery}}  \bibinfo{volume}{34} (\bibinfo{year}{2020}), \bibinfo{pages}{1363--1389}.
\newblock


\bibitem[Wang et~al\mbox{.}(2018)]%
        {wang2018attack}
\bibfield{author}{\bibinfo{person}{Xiaoyun Wang}, \bibinfo{person}{Minhao Cheng}, \bibinfo{person}{Joe Eaton}, \bibinfo{person}{Cho-Jui Hsieh}, {and} \bibinfo{person}{Felix Wu}.} \bibinfo{year}{2018}\natexlab{}.
\newblock \showarticletitle{Attack graph convolutional networks by adding fake nodes}.
\newblock \bibinfo{journal}{\emph{arXiv preprint arXiv:1810.10751}} (\bibinfo{year}{2018}).
\newblock


\bibitem[Wikipedia(2024)]%
        {shanon}
\bibfield{author}{\bibinfo{person}{Wikipedia}.} \bibinfo{year}{Accessed: (May 31, 2024)}\natexlab{}.
\newblock \bibinfo{booktitle}{\emph{{Jensen–Shannon divergence}}}.
\newblock
\newblock
\shownote{\url{https://en.wikipedia.org/wiki/Jensen-Shannon_divergence}}.


\bibitem[Wu et~al\mbox{.}(2024)]%
        {wuefficient}
\bibfield{author}{\bibinfo{person}{Dong-Dong Wu}, \bibinfo{person}{Chilin Fu}, \bibinfo{person}{Weichang Wu}, \bibinfo{person}{Wenwen Xia}, \bibinfo{person}{Xiaolu Zhang}, \bibinfo{person}{Jun Zhou}, {and} \bibinfo{person}{Min-Ling Zhang}.} \bibinfo{year}{2024}\natexlab{}.
\newblock \showarticletitle{Efficient Model Stealing Defense with Noise Transition Matrix}.
\newblock  (\bibinfo{year}{2024}).
\newblock


\bibitem[Wu et~al\mbox{.}(2022a)]%
        {wu2022linkteller}
\bibfield{author}{\bibinfo{person}{Fan Wu}, \bibinfo{person}{Yunhui Long}, \bibinfo{person}{Ce Zhang}, {and} \bibinfo{person}{Bo Li}.} \bibinfo{year}{2022}\natexlab{a}.
\newblock \showarticletitle{Linkteller: Recovering private edges from graph neural networks via influence analysis}. In \bibinfo{booktitle}{\emph{2022 ieee symposium on security and privacy (sp)}}. IEEE, \bibinfo{pages}{2005--2024}.
\newblock


\bibitem[Wu et~al\mbox{.}(2022b)]%
        {wu2022graph}
\bibfield{author}{\bibinfo{person}{Shiwen Wu}, \bibinfo{person}{Fei Sun}, \bibinfo{person}{Wentao Zhang}, \bibinfo{person}{Xu Xie}, {and} \bibinfo{person}{Bin Cui}.} \bibinfo{year}{2022}\natexlab{b}.
\newblock \showarticletitle{Graph neural networks in recommender systems: a survey}.
\newblock \bibinfo{journal}{\emph{Comput. Surveys}} \bibinfo{volume}{55}, \bibinfo{number}{5} (\bibinfo{year}{2022}), \bibinfo{pages}{1--37}.
\newblock


\bibitem[Xu et~al\mbox{.}(2018)]%
        {xu2018powerful}
\bibfield{author}{\bibinfo{person}{Keyulu Xu}, \bibinfo{person}{Weihua Hu}, \bibinfo{person}{Jure Leskovec}, {and} \bibinfo{person}{Stefanie Jegelka}.} \bibinfo{year}{2018}\natexlab{}.
\newblock \showarticletitle{How Powerful are Graph Neural Networks?}. In \bibinfo{booktitle}{\emph{International Conference on Learning Representations}}.
\newblock


\bibitem[Yang et~al\mbox{.}(2016)]%
        {yang2016revisiting}
\bibfield{author}{\bibinfo{person}{Zhilin Yang}, \bibinfo{person}{William Cohen}, {and} \bibinfo{person}{Ruslan Salakhudinov}.} \bibinfo{year}{2016}\natexlab{}.
\newblock \showarticletitle{Revisiting semi-supervised learning with graph embeddings}. In \bibinfo{booktitle}{\emph{International conference on machine learning}}. PMLR, \bibinfo{pages}{40--48}.
\newblock


\bibitem[Zhang et~al\mbox{.}(2023)]%
        {zhang2023demystifying}
\bibfield{author}{\bibinfo{person}{He Zhang}, \bibinfo{person}{Bang Wu}, \bibinfo{person}{Shuo Wang}, \bibinfo{person}{Xiangwen Yang}, \bibinfo{person}{Minhui Xue}, \bibinfo{person}{Shirui Pan}, {and} \bibinfo{person}{Xingliang Yuan}.} \bibinfo{year}{2023}\natexlab{}.
\newblock \showarticletitle{Demystifying uneven vulnerability of link stealing attacks against graph neural networks}. In \bibinfo{booktitle}{\emph{International Conference on Machine Learning}}. PMLR, \bibinfo{pages}{41737--41752}.
\newblock


\bibitem[Zhang and Zitnik(2020)]%
        {zhang2020gnnguard}
\bibfield{author}{\bibinfo{person}{Xiang Zhang} {and} \bibinfo{person}{Marinka Zitnik}.} \bibinfo{year}{2020}\natexlab{}.
\newblock \showarticletitle{Gnnguard: Defending graph neural networks against adversarial attacks}.
\newblock \bibinfo{journal}{\emph{Advances in neural information processing systems}}  \bibinfo{volume}{33} (\bibinfo{year}{2020}), \bibinfo{pages}{9263--9275}.
\newblock


\bibitem[Zhang et~al\mbox{.}(2022)]%
        {zhang2022inference}
\bibfield{author}{\bibinfo{person}{Zhikun Zhang}, \bibinfo{person}{Min Chen}, \bibinfo{person}{Michael Backes}, \bibinfo{person}{Yun Shen}, {and} \bibinfo{person}{Yang Zhang}.} \bibinfo{year}{2022}\natexlab{}.
\newblock \showarticletitle{Inference attacks against graph neural networks}. In \bibinfo{booktitle}{\emph{31st USENIX Security Symposium (USENIX Security 22)}}. \bibinfo{pages}{4543--4560}.
\newblock


\bibitem[Zhou et~al\mbox{.}(2023)]%
        {zhou2023strengthening}
\bibfield{author}{\bibinfo{person}{Zhanke Zhou}, \bibinfo{person}{Chenyu Zhou}, \bibinfo{person}{Xuan Li}, \bibinfo{person}{Jiangchao Yao}, \bibinfo{person}{Quanming Yao}, {and} \bibinfo{person}{Bo Han}.} \bibinfo{year}{2023}\natexlab{}.
\newblock \showarticletitle{On strengthening and defending graph reconstruction attack with markov chain approximation}.
\newblock \bibinfo{journal}{\emph{arXiv preprint arXiv:2306.09104}} (\bibinfo{year}{2023}).
\newblock


\bibitem[Zhu et~al\mbox{.}(2023)]%
        {zhu2023blink}
\bibfield{author}{\bibinfo{person}{Xiaochen Zhu}, \bibinfo{person}{Vincent~YF Tan}, {and} \bibinfo{person}{Xiaokui Xiao}.} \bibinfo{year}{2023}\natexlab{}.
\newblock \showarticletitle{Blink: Link Local Differential Privacy in Graph Neural Networks via Bayesian Estimation}. In \bibinfo{booktitle}{\emph{Proceedings of the 2023 ACM SIGSAC Conference on Computer and Communications Security}}. \bibinfo{pages}{2651--2664}.
\newblock


\bibitem[Zou et~al\mbox{.}(2021)]%
        {zou2021tdgia}
\bibfield{author}{\bibinfo{person}{Xu Zou}, \bibinfo{person}{Qinkai Zheng}, \bibinfo{person}{Yuxiao Dong}, \bibinfo{person}{Xinyu Guan}, \bibinfo{person}{Evgeny Kharlamov}, \bibinfo{person}{Jialiang Lu}, {and} \bibinfo{person}{Jie Tang}.} \bibinfo{year}{2021}\natexlab{}.
\newblock \showarticletitle{Tdgia: Effective injection attacks on graph neural networks}. In \bibinfo{booktitle}{\emph{Proceedings of the 27th ACM SIGKDD Conference on Knowledge Discovery \& Data Mining}}. \bibinfo{pages}{2461--2471}.
\newblock


\bibitem[Z{\"u}gner et~al\mbox{.}(2020)]%
        {zugner2020adversarial}
\bibfield{author}{\bibinfo{person}{Daniel Z{\"u}gner}, \bibinfo{person}{Oliver Borchert}, \bibinfo{person}{Amir Akbarnejad}, {and} \bibinfo{person}{Stephan G{\"u}nnemann}.} \bibinfo{year}{2020}\natexlab{}.
\newblock \showarticletitle{Adversarial attacks on graph neural networks: Perturbations and their patterns}.
\newblock \bibinfo{journal}{\emph{ACM Transactions on Knowledge Discovery from Data (TKDD)}} \bibinfo{volume}{14}, \bibinfo{number}{5} (\bibinfo{year}{2020}), \bibinfo{pages}{1--31}.
\newblock


\end{thebibliography}

\clearpage
\appendix


\end{document}